\relax
\documentclass[letterpaper]{article} %
\usepackage{aaai22}  %
\usepackage{times}  %
\usepackage{helvet}  %
\usepackage{courier}  %
\usepackage[hyphens]{url}  %
\usepackage{graphicx} %
\urlstyle{rm} %
\usepackage{natbib}  %
\usepackage{caption} %
\DeclareCaptionStyle{ruled}{labelfont=normalfont,labelsep=colon,strut=off} %
\frenchspacing  %
\setlength{\pdfpagewidth}{8.5in}  %
\setlength{\pdfpageheight}{11in}  %
\usepackage{algorithm}
\usepackage{algorithmic}
\usepackage{microtype}      %
\usepackage{subcaption}
\usepackage{xcolor}
\usepackage{bm}
\usepackage{amsmath}
\usepackage{mathtools}
\usepackage{amsfonts}
\usepackage{ifthen}
\usepackage{amsthm}
\newtheorem{theorem}{Theorem}

\usepackage{booktabs}       %
\usepackage{amsfonts}       %
\usepackage{nicefrac}       %
\usepackage{tikz}
\tikzstyle{arrow} = [thick,->,>=stealth,line width=0.5pt]

\usepackage{adjustbox}

\usepackage{amsmath,amsfonts,bm}

\def\eqref#1{equation~\ref{#1}}

\def\1{\bm{1}}

\DeclareMathAlphabet{\mathsfit}{\encodingdefault}{\sfdefault}{m}{sl}
\SetMathAlphabet{\mathsfit}{bold}{\encodingdefault}{\sfdefault}{bx}{n}

\DeclareMathOperator*{\argmax}{arg\,max}

\usepackage{textcomp}
\usepackage{siunitx}
\usepackage{wrapfig}
\usepackage{newfloat}
\usepackage{listings}
\lstset{%
	basicstyle={\footnotesize\ttfamily},%
	numbers=left,numberstyle=\footnotesize,xleftmargin=2em,%
	aboveskip=0pt,belowskip=0pt,%
	showstringspaces=false,tabsize=2,breaklines=true}
\floatstyle{ruled}
\newfloat{listing}{tb}{lst}{}
\floatname{listing}{Listing}
\pdfinfo{
/Title (AAAI Press Formatting Instructions for Authors Using LaTeX -- A Guide)
/Author (AAAI Press Staff, Pater Patel Schneider, Sunil Issar, J. Scott Penberthy, George Ferguson, Hans Guesgen, Francisco Cruz, Marc Pujol-Gonzalez)
/TemplateVersion (2022.1)
}

\setcounter{secnumdepth}{0} %

\title{Training a Resilient Q-Network against Observational Interference }
\author{Chao-Han Huck Yang$^{1}$, I-Te Danny Hung$^{2}$, Yi Ouyang$^{3}$, Pin-Yu Chen$^{4}$}
\affiliations{Georgia Institute of Technology$^{1}$, Columbia University$^{2}$, Preferred Networks America$^{3}$, IBM Research AI$^{4}$\\
\small \texttt{huckiyang@gatech.edu, ih2320@columbia.edu, ouyangyi@gmail.com, pin-yu.chen@ibm.com}}

\usepackage{bibentry}

\begin{document}

\maketitle

\begin{abstract}
Deep reinforcement learning (DRL) has demonstrated impressive performance in various gaming simulators and real-world applications.
In practice, however, a DRL agent may receive faulty observation by abrupt interferences such as black-out, frozen-screen, and adversarial perturbation. How to design a resilient DRL algorithm against these rare but mission-critical and safety-crucial scenarios is an essential yet challenging task. In this paper, we consider a deep q-network (DQN)  framework training with an auxiliary task of observational interferences such as artificial noises.
Inspired by causal inference for \textbf{observational interference}, we 
propose a causal inference based DQN algorithm called causal inference Q-network (CIQ).
We evaluate the performance of CIQ in several benchmark DQN environments with different types of interferences as auxiliary labels.
Our experimental results show that the proposed CIQ method could achieve higher performance and more resilience against observational interferences. 
\end{abstract}

\section{Introduction}

Deep reinforcement learning (DRL) methods have shown enhanced performance, gained widespread applications~\citep{mnih2015human,mnih2016asynchronous,  silver2017mastering}, and improved robot learning~\citep{gu2017deep} in navigation systems~\citep{tai2017virtual,  nagabandi2018learning}.
However, most successful demonstrations of these DRL methods are usually trained and deployed under well-controlled situations. In contrast, real-world use cases often encounter inevitable observational uncertainty~\citep{grigorescu2020survey, hafner2018learning, moreno2018neural} from an external attacker~\citep{ huang2017adversarial} or noisy sensor~\citep{fortunato2017noisy,  lee2018bayesian}. For examples, playing online video games may experience sudden black-outs or frame-skippings due to network instabilities, and driving on the road may encounter temporary blindness when facing the sun.
Such \textbf{an abrupt interference on the observation could cause serious issues} for DRL algorithms.
Unlike other machine learning tasks that involve only a single mission at a time (e.g., image classification), an RL agent has to deal with a dynamic~\citep{schmidhuber1992learning} or even learn from latent states with generative models~\citep{schmidhuber1991reinforcement,jaderberg2017reinforcement, ha2018world, hafner2018learning, lynch2020learning} to anticipate future rewards in complex environments. 
Therefore, DRL-based systems are likely to propagate and even enlarge risks (e.g., delay and noisy pulsed-signals on sensor-fusion~\citep{yurtsever2020survey, johansen2015estimation}) induced from the uncertain interference.

In this paper, we investigate the \emph{resilience} ability of an RL agent to withstand unforeseen, rare, adversarial and potentially catastrophic interferences, and to recover and adapt by improving itself in reaction to these events. 
We consider a resilient generative RL framework with observational interferences as an auxiliary task.
At each time, the agent's observation is subjected to a type of sudden interference at a predefined possibility. 
Whether or not an observation has interfered is referred to as the interference label.

Specifically, to train a resilient agent, we provide the agent with the interference labels during training. For instance, the labels could be derived from some uncertain noise generators recording whether the agent observes an intervened state at the moment as a binary causation label. By applying the labels as an \emph{intervention} into the environment, the RL agent is asked to learn a binary causation label and embed a latent state into its model. However, when the trained agent is deployed in the field (i.e., the testing phase), the agent only receives the interfered observations but is agnostic to interference labels and needs to act resiliently against the interference.

For an RL agent to be resilient against interference, the agent needs to diagnose observations to make the correct inference about the reward information.
To achieve this, the RL agent has to reason about what leads to desired rewards despite the irrelevant intermittent interference. To equip an RL agent with this reasoning capability, we exploit the causal inference framework. Intuitively, a causal inference model for observation interference uses an unobserved confounder \citep{pearl2009causality, pearl2019seven, pearl1995testability, saunders2018trial, bareinboim2015bandits, zhang2020causala, khemakhem2021causal} to capture the effect of the interference on the rewards (outcomes) collected from the environment. In recent works, RL is also showing additional benefits incorporating generative causal modeling, such as providing interpretability~\citep{madumal2020explainable}, treatment estimation~\citep{zhang2020designing, zhang2021bounding}, imitation learning~\citep{zhang2020causalb}, enhanced invariant prediction~\citep{zhang2020invariant}, and generative model for transfer learning~\citep{killian2020counterfactually}.

When such a confounder is available, the RL agent can focus on the confounder for relevant reward information and make the best decision.
As illustrated in Figure \ref{fig:figure1}, we propose a causal inference based DRL algorithm termed causal inference Q-network (CIQ).
During training, when the interference labels are available, the CIQ agent will implicitly learn a causal inference model by embedding the confounder into a latent state.
At the same time, the CIQ agent will also train a Q-network on the latent state for decision making. Then at testing, the CIQ agent will make use of the learned model to estimate the confounding latent state and the interference label. The design of CIQ is inspired by causal inference on state variable and using treatment switching method~\citep{shalit2017estimating} to learn latent variable by incorporating observational interference.

The history of latent states is combined into a causal inference state, which captures the relevant information for the Q-network to collect rewards in the environment despite of the observational interference. 

\begin{figure}[ht!]
	\centering
	    \includegraphics[width=0.47\textwidth]{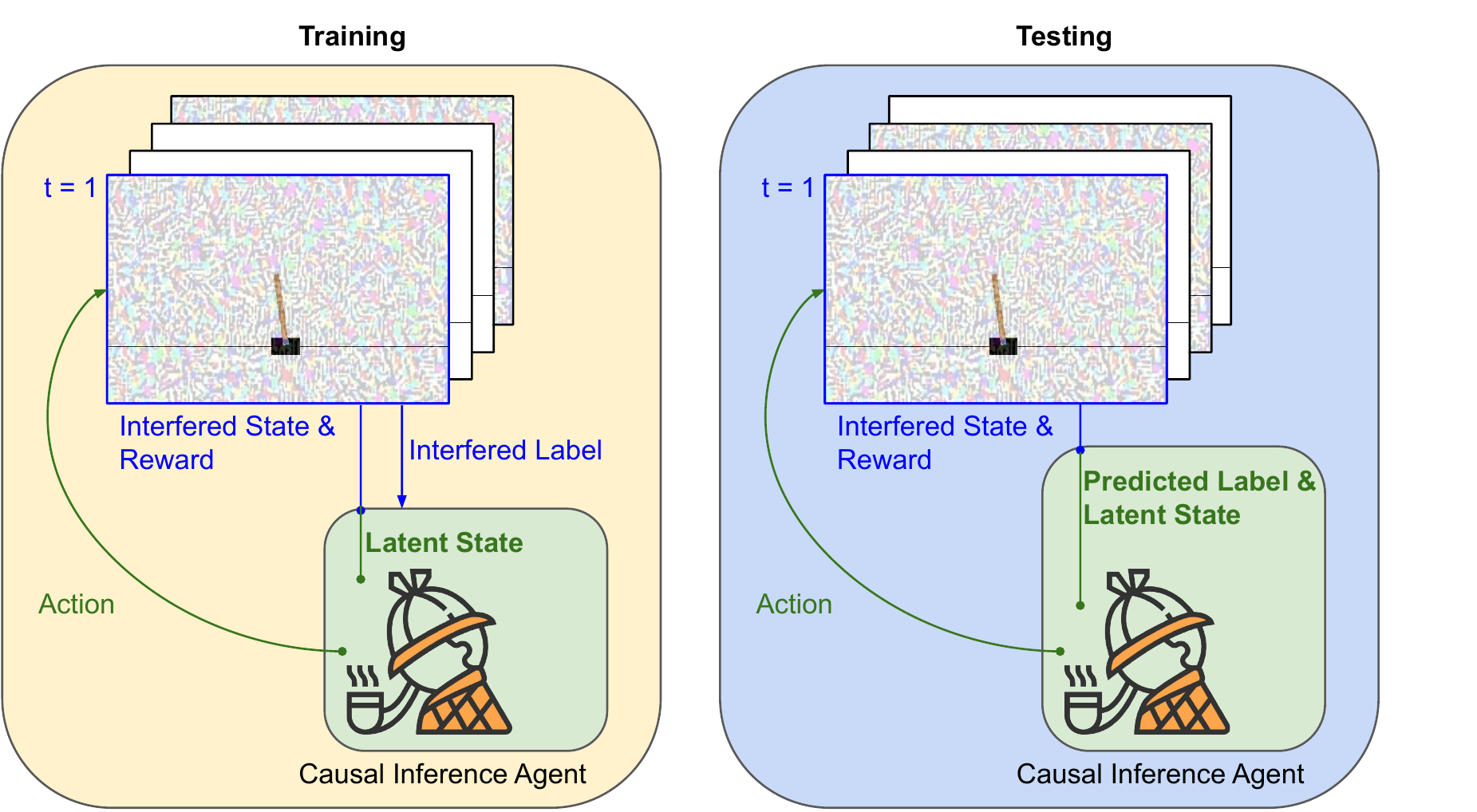}
\vspace{-0.2cm}
   \caption{The proposed causal inference Q-network (CIQ) training and test framework, where the latent state is an unobserved (hidden) confounder variable. We refer the readers to Figure~\ref{fig:diagram} for detailed descriptions on its graphical model.  } 
\label{fig:figure1}
\vspace{-0.2cm}
\end{figure}

In this paper, we evaluate the performance of our method in four environments: 1) Cartpole-v0 -- the continuous control environment~\citep{brockman2016openai}; 2) the 3D graphical Banana Collector~\citep{juliani2018unity}); 3) an Atari environment LunarLander-v2~\citep{brockman2016openai}, and 4)  pixel Cartpole -- visual learning from the pixel inputs of Cartpole.
For each of the environments, we consider four types of interference: (a) black-out, (b) Gaussian noise, (c) frozen screen, and (d) additive noise from adversarial perturbation. 

In the testing phase mimicking the practical scenario that the agent may have interfered observations but is unaware of the true interference labels (i.e., happens or not),
the results show that our CIQ method can perform better and more resilience against all the four types of interference.
Furthermore, to benchmark the level of resilience of different RL models, we propose a new robustness measure, called CLEVER-Q, to evaluate the robustness of Q-network based RL algorithms. The idea is to compute a lower bound on the observation noise level such that the greedy action from the Q-network will remain the same against any noise below the lower bound. According to this robustness analysis, our CIQ algorithm indeed achieves higher CLEVER-Q scores compared with the baseline methods. 

The main contributions of this paper include 1) a framework to evaluate the resilience of DQN-based DRL methods under abrupt observational interferences; 2) the proposed CIQ architecture and algorithm towards training a resilient DQN agent, and 3) 
an extreme-value theory based robustness metric (CLEVER-Q) for quantifying the resilience of Q-network based RL algorithms. 

\section{Related Works}
\textbf{Causal Inference for Generative Reinforcement Learning:} 
Causal inference~\citep{greenland1999causal, pearl2009causality, pearl2016causal, pearl2019seven, robins1995analysis} has been used to empower the learning process under noisy observation and have better interpretability on deep learning models~\citep{shalit2017estimating, louizos2017causal}, also with efforts~\citep{ jaber2019causal, forney2017counterfactual, bareinboim2015bandits,bennett2021off, jung2021estimating} on causal online learning and bandit methods. Defining causation and applying causal inference framework to DRL still remains relatively unexplored. Recent works~\citep{lu2018deconfounding, tennenholtz2019off} study this problem by defining action as one kind of intervention and 
estimating the causal effects. In contrast, we introduce 
observational interference into generative DRL by applying extra noisy and uncertain inventions. Inspired by the treatment switching and representation learning models~\citep{shalit2017estimating, louizos2017causal, helwegen2020improving}, we leverage the causal effect of observational interferences on states, and design an end-to-end structure for learning a \emph{causal-observational} representation evaluating treatment effects on rewards.

\textbf{Adversarial Perturbation:}
An intensifying challenge against deep neural network based systems is adversarial perturbation for making incorrect decisions. Many gradient-based noise-generating methods \citep{goodfellow2014explaining, huang2017adversarial, everett2021neural}  have been conducted for misclassification and mislead an agent's output action. As an example of using DRL model playing Atari games, an adversarial attacker~\citep{lin2017tactics, yang2020enhanced} could jam in a timely and barely detectable noise to maximize the prediction loss of a Q-network and cause massively degraded performance.

\textbf{Partially Observable Markov Decision Processes (POMDPs):} 
Our resilient RL framework can be viewed as a POMDP with interfered observations.
Belief-state methods are available for simple POMDP problems (e.g., plan graph and the tiger problem~\citep{kaelbling1998planning}), but no provably efficient algorithm is available for general POMDP settings~\citep{papadimitriou1987complexity,gregor2018temporal}. Recently, Igl \emph{et. al}~\cite{igl2018deep} have proposed a DRL approach for POMDPs by combining variational autoencoder and policy-based learning, but this kind of methods do not consider the interference labels available during training in our resilient RL framework.

\section{Resilient Reinforcement Learning}
\label{sec_attack}
In this section, we formally introduce our resilient RL framework and provide an extreme-value theory based metric called CLEVER-Q for measuring the robustness of DQN-based methods. 

We consider a sequential decision-making problem where an agent interacts with an environment. 
At each time $t$, the agent gets an observation $x_t$, e.g. a frame in a video environment.
As in many RL domains (e.g., Atari games), we view $s_t = (x_{t-M+1}, \ldots, x_t)$ to be the state of the environment where $M$ is a fixed number for the history of observations.
Given a stochastic policy $\pi$, the agent chooses an action $a_t \sim \pi(s_t)$ from a discrete action space based on the observed state and receives a reward $r_t$ from the environment. 
For a policy $\pi$, define the Q-function
$
    Q^{\pi}(s, a) = \mathbb E \big[ \sum_{t=0}^\infty \gamma^t r_t|s_0 = s,\, a_0 = a, \pi\big]
$
where $\gamma \in (0, 1)$ is the discount factor.
The agent's goal is to find the optimal policy $\pi^*$ that achieves the optimal Q-function given by $Q^{*}(s, a) = \max_\pi Q^{\pi}(s, a)$.

\subsection{Resilience base on an Interventional Perspective} 
\label{sub_41}
To evaluate the resilience ability of RL agents, we introduce additional interference as auxiliary information (as illustrated in Fig \ref{fig:figure1} ) as an empirical process ~\citep{pearl2009causality, louizos2017causal} for observation. Given a type of interference $\mathcal I$, the agent's observation becomes:
\begin{equation}
    x'_t =  F^{\mathcal I}(x_t, i_t)=i_t \times \mathcal I(x_t) + (1 - i_t) \times x_t
    \label{eq:noise:41}
\end{equation}
where $i_t \in \{0, 1\}$ is the label indicating whether the observation is interfered at time $t$ or not, and $\mathcal I(x_t)$ is the interfered observation.

We assume that interference labels $i_t$ follow an i.i.d. Bernoulli process with a fixed interference probability $p^{\mathcal I}$ as a noise level.\footnote{The i.i.d. assumption could be extended to a Markovian dynamic interference model. We show experiments with dynamic interference in Appendix E\ref{sup:sec:markov}.}
For example, when $p^{\mathcal I}$ equals to 10\%, each observational state has a 10\% chance to be intervened under a perturbation. In this work, we consider the original observations, as illustrated in Figure \ref{fig:n3} (a), under four types of interference as described below.

\textbf{Gaussian Noise.}
Gaussian noise or white noise is a common interference to sensory data~\citep{osband2019behaviour, yurtsever2020survey}. The interfered observation becomes $\mathcal I(x_t) = x_t + n_t$ with a zero-mean Gaussian noise $n_t$. The noise variance is set to be the variance of all recorded states as illustrated in Figure \ref{fig:n3} (b). 

\textbf{Adversarial Observation.} 
Following the standard adversarial RL attack setting, we use fast gradient sign method (FGSM) \citep{szegedy2013intriguing} to generate adversarial patterns against the DQN loss~\citep{ huang2017adversarial} as illustrated in Figure \ref{fig:n3} (c). 
The observation is given by 
$\mathcal I(x_t) = x_t + \epsilon \operatorname{sign}\left(\nabla_{x_t} Q(x_t,y;\theta)\right)$
where $y$ is the optimal action by weighting over possible actions.

\textbf{Observation Black-Out.} Off-the-shelf hardware can affect the entire sensor networks as a sensing background~\citep{yurtsever2020survey} over-shoot with $\mathcal I(x_t) = 0$~\citep{yan2016can}. This perturbation is realistic owing to overheat hardware and losing the observational information of sensors.
\label{s3:blackout}

\textbf{Frozen Frame.}
Lagging and frozen frame(s)~\citep{kalashnikov2018qt} often come from limited data communication bottleneck bandwidth. A frozen frame is given by $\mathcal I(x_t) = x_{t-1}$. If the perturbation is constantly present, the frame will remain the first frozen frame since the perturbation happened.  

\begin{figure}[ht!]
\begin{center}
   \includegraphics[width=0.85\linewidth]{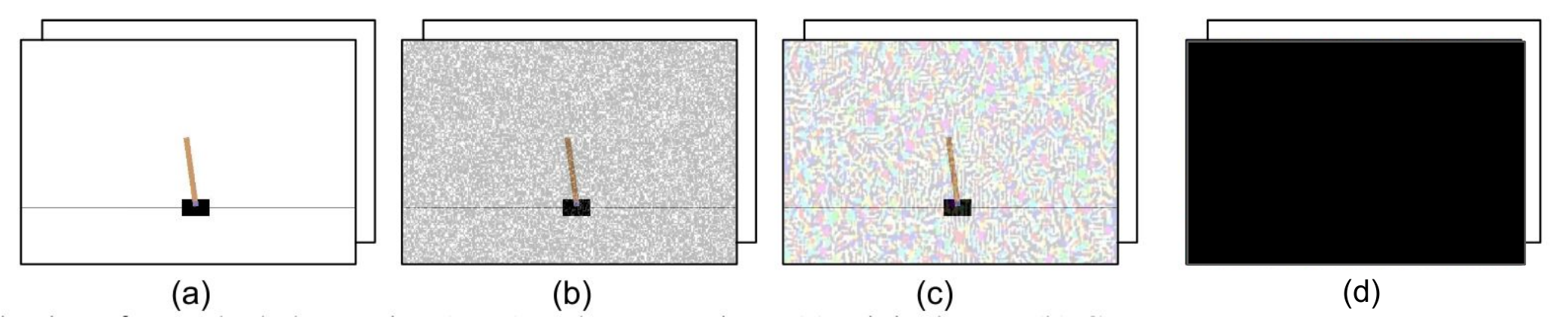}
\end{center}
\vspace{-0.4cm}
\caption{ Visualization of perturbed observation (state) under uncertainty: (a) original state; (b) Gaussian perturbation; (c) adversarial perturbation \cite{huang2017adversarial}, and (d) black-out perturbation (a white-out ablation in the Appendix E). }
\label{fig:n3}
\vspace{-0.5cm}
\end{figure}

\subsection{Resilient Reinforcement Learning Framework}

With observational interference, instead of the actual state $s_t$, the agent only observes $s'_t = (x'_{t-M+1}, \ldots,  x'_t)$.
The agent now needs to choose its actions $a_t \sim \pi(s'_t)$ based on the interfered observation. The resilient RL objective for the agent is to find a policy $\pi$ to maximize rewards in this environment under observational interference. 
Under the resilient framework, the goal of a Q-learning based agent is to learn the relation between $s'_t$ and $Q_t$ where $Q_t(a) =  \max_\pi \mathbb E \big[ \sum_{\tau= t}^\infty \gamma^{(\tau-t)} r_\tau|s'_t,\, a_t = a, \pi\big]$ denotes the Q-values given the interfered observation $s'_t$ at time $t$.

From the RL model and the observation model of Eq. (\ref{eq:noise:41}), the relation among the observation $s'_t$, Q-values $Q_t$, and interference $i_t$ can be described by a causal graphical model (CGM) in Figure \ref{fig:diagram}. 
In the CGM, $z_t=(s_t, i_{t-M+1}, \ldots,  i_t)$ includes the actual state $s_t$ of the system together with the interference labels which causally affects all $s'_t$, $Q_t$, and $i_t$.
Note that $z_t$ is not observable to the agent due to the interference; $z_t$ could be viewed as a hidden confounder in causal inference.

Since only the interfered observation $s'_t$ is available, the interference label $i_t$ is also non-observable in evaluating the resilience ability of an agent. 
However, the interference information is often accessible in the training phase, such as the use of a navigation simulator recorded with noisy augmentation~\citep{grigorescu2020survey}~for simulating interference in the training environment. 
We will discuss in the next subsection the benefit of utilizing the interference labels to improve learning efficiency.

\subsection{Learning with Interference Labels}

The goal of a resilient RL agent is to learn $P(Q_t | s'_t)$ to infer the Q-value $Q_t$ based on the interfered observation $s'_t$.
Note that one can compute $P(Q_t | s'_t)$ by determining the joint distribution $P(z_t, s'_t, i_t, Q_t)$ of all variables in the CGM in Figure~\ref{fig:diagram}. 
Despite the presence of the hidden variable $z_t$, 
similar to causal inference with hidden confounders~\citep{louizos2017causal}, estimating the joint distribution $P(z_t, s'_t, i_t, Q_t)$ could be done efficiently when the agent is provided the interference labels $i_t$ during training.
On the other hand, if only the observation $s'_t$ is available, the agent can only directly estimate $P(Q_t|s'_t)$, which is less efficient in terms of training sample usage.

We provide the interference type $\mathcal I$ and the interference labels $i_t$ to efficiently train a resilient RL agent as shown in Figure~\ref{fig:diagram}(b); however, in the actual testing environment, the agent only has access to the interfered observations $x'_t$ as in Figure~\ref{fig:diagram}(a).

\begin{figure}[ht!]
	\centering
	\begin{subfigure}{0.10\textwidth} %
	    \centering
	\begin{tikzpicture}[x = 2.0cm, y=2.0cm, scale=0.8, every node/.style={scale=0.8}]
\tikzstyle{var} = [draw, circle, minimum height=1cm,text centered, line width=0.5pt ]
\tikzstyle{obs} = [var]
\tikzstyle{train} = [var]

\node[var] at (0,0) (zt) {$z_t$};
\node[train, fill=gray!30] at (0.8, 0.8) (it) {$i_t$};
\node[obs, fill=gray!30] at (0, 0.8) (txt) {$s'_t$};
\node[obs, fill=gray!30] at (0.8, 0) (rt) {$Q_t$};

\draw[arrow] (zt) -- (it);
\draw[arrow] (zt) -- (txt);
\draw[arrow] (zt) -- (rt);
\end{tikzpicture}
\caption{\small Training}
\end{subfigure}
\quad\quad\quad
\begin{subfigure}{0.10\textwidth} %
	    \centering
	\begin{tikzpicture}[x = 2.0cm, y=2.0cm, scale=0.8, every node/.style={scale=0.8}]
\tikzstyle{var} = [draw, circle, minimum height=1cm,text centered, line width=0.5pt ]
\tikzstyle{obs} = [var]
\tikzstyle{train} = [var]

\node[var] at (0,0) (zt) {$z_t$};
\node[train] at (0.8, 0.8) (it) {$i_t$};
\node[obs, fill=gray!30] at (0, 0.8) (txt) {$s'_t$};
\node[obs, fill=gray!30] at (0.8, 0) (rt) {$Q_t$};

\draw[arrow] (zt) -- (it);
\draw[arrow] (zt) -- (txt);
\draw[arrow] (zt) -- (rt);
\end{tikzpicture}
\caption{\small Testing}
\end{subfigure}

\caption{ Causal graphical model (CGM) for the training phase (a) and the testing phase (b).
White nodes $s'_t$ and $Q_t$ are observable. Node $z_t=(s_t, i_{t-M+1}, \ldots,  i_t)$, colored by white, is not observable. Node $i_t$, colored by white in (b), is only observable during training.
}

\label{fig:diagram}
\end{figure}
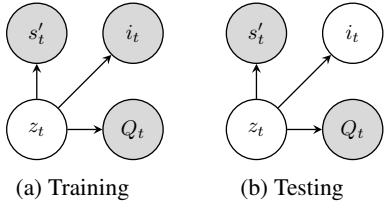
\subsection{Causal Inference Q-Network}

With the observable variables $(s'_t, i_t, Q_t)$ in Figure~\ref{fig:diagram}(a) during training, we aim to learn a model to infer the Q-values by estimating the joint distribution $P(z_t, s'_t, i_t, Q_t)$.
Despite the underlying dynamics in the RL system, when we view the interference as a treatment, the CGM in Figure~\ref{fig:diagram}(a) resembles some common causal inference models with binary treatment information and hidden confounders~\citep{louizos2017causal}. 
In this kind of causal inference problems, 
by leveraging on the binary property for treatment information, TARNet~\citep{shalit2017estimating} and CEVAE~\citep{louizos2017causal} introduced a binary switching neural architecture to efficiently learn latent models for causal inference. 

Inspired by the switching mechanism for causal inference, we propose the causal inference Q-network, referred as CIQ, that maps the interfered observation $s'_t$ into a latent state $z_t$, makes proper inferences about the interference condition $i_t$, and adjusts its policy based on the estimated interference. 

\begin{figure}[ht!]
\centering
\includegraphics[width=0.48\textwidth]{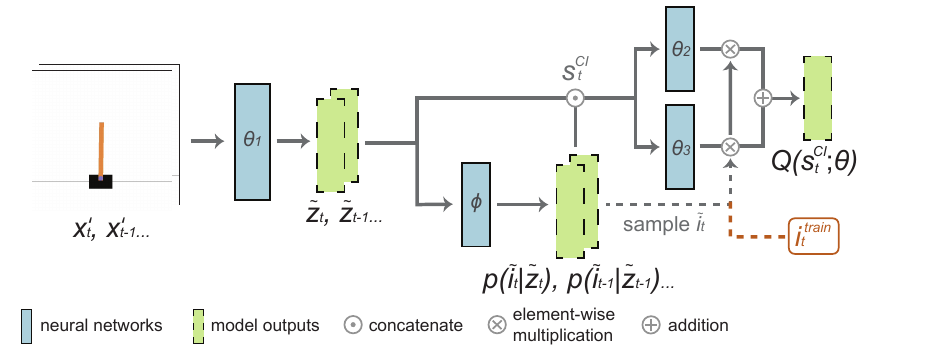}
\caption{ CIQ architecture. The notation $i^{train}_t$ denotes the inference label available during training, whereas $\tilde i_t$ is sampled during inference as $i_t$ is unknown.}
\label{Nets}
\end{figure}

We approximate the latent state by a neural network $\tilde z_t = f_1(x'_t; \theta_1)$. From the latent state, we generate the estimated interference label $\tilde i_t \sim p(\tilde i_t|z_t) = f_I(z_t; \phi)$. We denote $s^{CI}_t = (\tilde z_{t-M+1}, \tilde i_{t-M+1}, \ldots, \tilde z_t, \tilde i_t)$ to be the causal inference state.
As discussed in the previous subsection, the causal inference state acts as a confounder between the interference and the reward.
Therefore, instead of using the interfered state $s'_t$, the causal inference state $s^{CI}_t$ contains more relevant information for the agent to maximize rewards.
Using the causal inference state helps focus on meaningful and informative details even under interference.

With the causal inference state $s^{CI}_t$, the output of the Q-network $Q(s^{CI}_t; \theta)$ is set to be switched between two neural networks $f_2(s^{CI}_t;\theta_2)$ and $f_3(s^{CI}_t; \theta_3)$ by the interference label.
Such a switching mechanism prevents our network from over-generalizing the causal inference state. During training, switching between the two neural networks is determined by the training interference label $i_t^{\text{train}}$. We assume that the true interference label is available in the training phase so $i_t^{\text{train}} = i_t$.
In the testing, when $i_t$ is not available, we use the predicted interference label $\tilde i_t$ as the switch to decide which of the two neural networks to use.

All the neural networks $f_1, f_2, f_3, f_I$ have two fully connected layers\footnote{Though such manner may lead to the myth of over-parameterization, our ablation study proves that we can achieve better results with almost the same amount of parameters.} 
with each layer followed by the ReLU activation except for the last layer in $f_2, f_3$ and $f_I$. The overall CIQ model is shown in Figure \ref{Nets} and $\theta = (\theta_1, \theta_2, \theta_3, \phi)$ denotes all its parameters.
Note that, as common practice for discrete action spaces, the Q-network output $Q(s^{CI}_t; \theta)$ is an $\mathcal A$-dimensional vector where $\mathcal A$ is the size of the action space, and each dimension represents the value for taking the corresponding action.

Finally, we train the CIQ model $Q(s'_t; \theta)$ end-to-end by the DQN algorithm with an additional loss for predicting the interference label.
The overall CIQ objective function is defined as: 
\begin{align}
\label{eq:loss:3}    
    &L^{\text{CIQ}}(\theta_1, \theta_2, \theta_3, \phi)   
     = i_t^{\text{train}} \cdot L^{\text{DQN}}(\theta_1, \theta_2, \phi)
    \nonumber \\
     &+(1 - i_t^{\text{train}}) \cdot L^{\text{DQN}}(\theta_1, \theta_3, \phi) 
    + \lambda \cdot  (i_t^{\text{train}} \log p(\tilde i_t | \tilde z_t ; \theta_1, \phi)\nonumber \\&+(1 - i_t^{\text{train}}) \log(1 - p(\tilde i_t | \tilde z_t ; \theta_1, \phi))),
\end{align}
where $\lambda$ is a scaling constant and is set to 1 for simplicity. Due to the design of the causal inference state and the switching mechanism, we will show that CIQ can perform resilient behaviors against the observation interferences. We introduce how to quantify the robustness of a Q-network under noisy observation in next subsection. The CIQ training procedure (Algorithm \ref{CIA_training}) and an advanced CIQ based on variational inference~\citep{louizos2017causal} are described in Appendix B.

\subsection{CLEVER-Q: A Robustness Evaluation Metric for Q-Networks}

Here we provide a comprehensive score (CLEVER-Q) for evaluating the robustness of a Q-network model by extending the CLEVER robustness score ~\citep{weng2018evaluating} designed for classification tasks to Q-network based DRL tasks. Consider an $\ell_p$-norm bounded ($p \geq 1$) perturbation $\delta$ to the state $s_t$. We first derive a lower bound $\beta_L$ on the minimal perturbation to $s_t$ for altering the action with the top Q-value, i.e., the greedy action. For a given $s_t$ and a Q-network, this lower bound $\beta_L$ provides a robustness guarantee that the greedy action at $s_t$ will be the same as that of \textit{any} perturbed state $s_t+\delta$, as long as the perturbation level $\|\delta\|_p \leq \beta_L$. Therefore, the larger the value $\beta_L$ is, the more resilience of the Q-network against perturbations can be guaranteed. Our CLEVER-Q score uses the extreme value theory to evaluate the lower bound  $\beta_L$ as a robustness metric for benchmarking different Q-network models. The proof of Theorem 1. is available in Appendix B. %

\begin{theorem}

Consider a Q-network $Q(s,a)$ and a state $s_t$. Let $\mathcal{A}^* = \argmax_{a} Q(s_t, a)$ be the set of greedy (best) actions having the highest Q-value at $s_t$ according to the Q-network. Define $g_a(s_t) = Q(s_t, \mathcal{A}^*) - Q(s_t, a)$ for 
every action $a$, where $Q(s_t, \mathcal{A}^*)$ denotes the best Q-value at $s_t$.
Assume $g_a(s_t)$ is locally Lipschitz continuous\footnote{Here locally Lipschitz continuous means $g_a(s_t)$ is Lipschitz continuous within the $\ell_p$ ball centered at $s_t$ with radius $R_p$. We follow the same definition as in \citep{weng2018evaluating}.} with its local Lipschitz constant denoted by $L_q^a$, where $1/p+1/q = 1$ and $p \geq 1$. For any $p\geq 1$, define the lower bound
\begin{equation}
    \beta_{L}= min_{a \notin  \mathcal{A}^*} g_a(s_t) / L_q^a.
    \label{eq:cleverq}
\end{equation}
Then for any $\delta $ such that $\|\delta\|_p \leq \beta_L$, we have
   $\argmax_{a} Q(s_t, a) = \argmax_{a} Q(s_t + \delta, a)$.
\end{theorem}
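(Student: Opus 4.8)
The plan is to reduce the statement to a one‑line perturbation bound on the Q‑value margins, in the spirit of the CLEVER analysis for classifiers~\citep{weng2018evaluating}. Fix a greedy action $c^\star \in \mathcal{A}^*$, so that $g_a(s_t) = Q(s_t, c^\star) - Q(s_t, a) > 0$ for every non‑greedy action $a \notin \mathcal{A}^*$, while $g_a(s_t) = 0$ for $a \in \mathcal{A}^*$. The only analytic fact needed is that each margin function $g_a$ cannot change by more than $L_q^a \|\delta\|_p$ under a perturbation $\delta$ that stays inside the $\ell_p$ ball of radius $R_p$ around $s_t$ on which $g_a$ is assumed locally Lipschitz; this is immediate from the definition of the local Lipschitz constant (or, if one only assumes a bound $\|\nabla g_a\|_q \le L_q^a$ on that ball, from Hölder's inequality applied to $g_a(s_t+\delta) - g_a(s_t) = \int_0^1 \nabla g_a(s_t + \tau\delta)^\top \delta \, d\tau$, using $1/p + 1/q = 1$).

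Given this, the argument is elementary: for any $a \notin \mathcal{A}^*$ and any $\delta$ with $\|\delta\|_p \le \beta_L$, the definition of $\beta_L$ as a minimum gives $\|\delta\|_p \le \beta_L \le g_a(s_t)/L_q^a$, hence
\[
  g_a(s_t + \delta) \;\ge\; g_a(s_t) - L_q^a\|\delta\|_p \;\ge\; g_a(s_t) - L_q^a \cdot \frac{g_a(s_t)}{L_q^a} \;=\; 0 .
\]
Unwinding the definition of $g_a$, this says $Q(s_t+\delta, c^\star) \ge Q(s_t+\delta, a)$ for every non‑greedy $a$, so $c^\star$ still attains the maximum Q‑value at $s_t + \delta$ and therefore $c^\star \in \argmax_a Q(s_t+\delta,a)$; the greedy action is preserved. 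One also needs $\beta_L \le R_p$ so that $s_t+\delta$ lies in the region where local Lipschitz continuity was assumed, which is part of the setup as in~\citep{weng2018evaluating}.

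A few finer points I would address in the full write‑up. To obtain the set equality $\argmax_a Q(s_t,a) = \argmax_a Q(s_t+\delta,a)$ literally, rather than just ``a greedy action survives,'' one takes the perturbation bound strict ($\|\delta\|_p < \beta_L$), which upgrades the chain above to $g_a(s_t+\delta) > 0$ and forces every maximizer at $s_t+\delta$ to lie in $\mathcal{A}^*$; the reverse inclusion follows by repeating the estimate with any other element of $\mathcal{A}^*$ in the role of $c^\star$. Also, $\beta_L$ is well defined and strictly positive precisely because the minimum runs over $a \notin \mathcal{A}^*$, where $g_a(s_t)>0$. I do not expect any deep obstacle here — the inequality chain is routine — so the main work is bookkeeping around ties in $\mathcal{A}^*$ and keeping track of the radius $R_p$ within which the Lipschitz estimate is valid; the genuinely nontrivial component of CLEVER‑Q, namely estimating $L_q^a$ via extreme value theory, is orthogonal to this theorem and belongs to the score's sampling procedure.
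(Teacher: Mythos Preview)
Your proposal is correct and follows essentially the same approach as the paper: use the local Lipschitz bound (via H\"older's inequality) to control $|g_a(s_t+\delta)-g_a(s_t)|$ by $L_q^a\|\delta\|_p$, deduce $g_a(s_t+\delta)\ge 0$ whenever $\|\delta\|_p \le g_a(s_t)/L_q^a$, and then take the minimum over $a\notin\mathcal{A}^*$. If anything, you are more careful than the paper about the distinction between ``a greedy action survives'' and literal set equality of the $\argmax$, and about the $\beta_L\le R_p$ caveat; the paper's proof glosses over these points.
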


\begin{figure}[ht]
\begin{center}
   \includegraphics[width=0.75\linewidth]{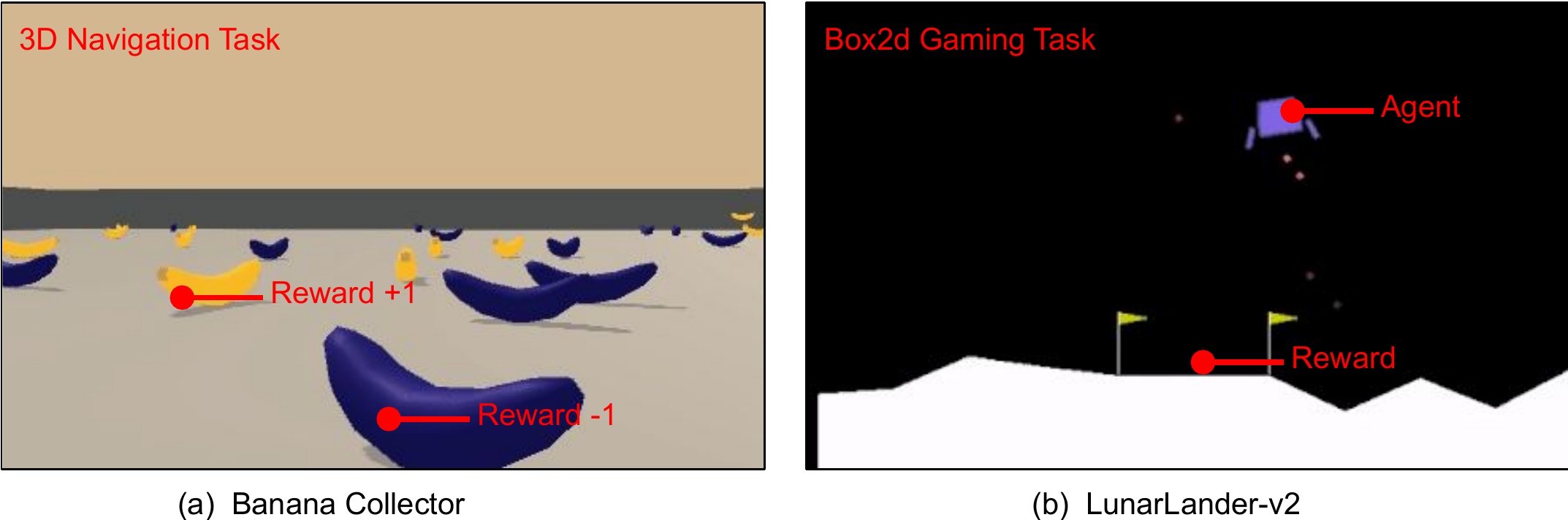}
\end{center}
\vspace{-0.4cm}
\caption{ Illustration of our environments on: (a) a 3D navigation task, banana collector~\citep{juliani2018unity}, and (b) a video game, LunarLander~\citep{brockman2016openai}. } 
\label{fig:new:env}
\end{figure}
\section{Experiments}
\subsection{Environments for DQNs}
Our testing platforms were based on (a) OpenAI Gym \citep{brockman2016openai}, (b) Unity-3D environments \citep{juliani2018unity}, (c) a 2D gaming environment~\citep{brockman2016openai}, and (d) visual learning from pixel inputs of cart pole. Our test environments cover some major application scenarios and feature discrete actions for training DQN agents with the CLEVER-Q analysis. For instance, Atari games and space-invaders are popular real-world applications. Unity 3D banana navigation is a physical simulator but provides virtual to real options for further implementations.

\textbf{Vector Cartpole:}  Cartpole~\citep{sutton1998reinforcement} is a classical continuous control problem. We use Cartpole-v0 from Gym~\citep{brockman2016openai} with a targeted reward  $=195.0$. The defined environment is manipulated by adding a force of $+1$ or $-1$ to a moving cart. 

\textbf{Banana Collector:} The Banana collector shown in Figure \ref{fig:new:env} (a) is one of the Unity 3D baseline~\citep{juliani2018unity}. Different from the MuJoCo simulators with continuous actions, the Banana collector is controlled by four discrete actions corresponding to moving directions. The targeted reward is $12.0$ points by accessing correct bananas ($+1$). The state-space has 37 dimensions included velocity and a ray-based perception of objects around the agent.

\textbf{Lunar Lander:} Similar to the Atari gaming environments, Lunar Lander-v2 (Figure \ref{fig:new:env} (c)) is a discrete action environment from OpenAI Gym~\citep{brockman2016openai} to control firing ejector with a targeted reward of $200$. The state is an eight-dimensional vector that records the lander’s position, velocity, angle, and angular velocities. The episode finishes if the lander crashes or comes to rest, receiving a reward $-100$ or $+100$ Firing ejector costs $-0.3$ each frame with $+10$ for each ground contact.

\textbf{Pixel Cartpole:}
To further evaluate our models, we conduct experiments from the pixel inputs in the cartpole environment as a visual learning task. The size of input state is $400\times600$. We use a max-pooling and a  convolution layer to extract states as network inputs. The environment includes two discrete actions $\left \{ left, right \right \}$, which is identical to the Cartpole-v0 of the vector version. 

\subsection{Baseline Methods}
In the experiments, we compare our CIQ algorithm with two sets of DQN-based DRL baselines to demonstrate the resilience capability of the proposed method. We ensure all the models have the \textbf{same number} of 9.7 millions \textbf{parameters} with careful fine-tuning to avoid model capacity issues.

\textbf{Pure DQN:} We use DQN as a baseline in our experiments. The DQN agent is trained and tested on interfered state $s'_t$. We also evaluate common DQN improvements in Appendix C and find the improvements (e.g., DDQN) have no significant effect against interference.

\textbf{DQN with an interference classifier (DQN-CF):} 
In the resilient reinforcement learning framework, the agent is given the true interference label $i_t^{\text{train}}$ at training.
Therefore, we would like to provide this additional information to the DQN agent for a \textbf{fair comparison.}
During training, the interfered state $s'_t$ is concatenated with the true label $i_t^{\text{train}}$ as the input for the DQN agent.
Since the true label is not available at testing, we train an additional binary classifier (CF) for the DQN agent.
The classifier is trained to predict the interference label, and this predicted label will be concatenated with the interfered state as the input for the DQN agent during testing.

\textbf{DQN with safe actions (DQN-SA):} 
Inspired by shielding-based safe RL~\citep{alshiekh2018safe}, we consider a DQN baseline with safe actions (SA). The DQN-SA agent will apply the DQN action if there is no interference. However, if the current observation is interfered, it will choose the action used for the last uninterfered observation as the safe action. This action-holding method is also a typical control approach when there are missing observations \citep{franklin1998digital}.
Similar to DQN-CF, a binary classifier for interference is trained to provide predicted labels at testing.

\textbf{DVRLQ and DVRLQ-CF:} Motivated by deep variational RL (DVRL)~\citep{igl2018deep}, we provide a version of DVRL as a POMDP baseline. We call this baseline DVRLQ because we replace the A2C-loss with the DQN loss. Similar to DQN-CF, we also consider another baseline of DVRLQ with a classifier, referred to as DVRLQ-CF, for a fair comparison using the interference labels.
\begin{figure}[ht!]
	\begin{subfigure}{0.40\textwidth} %
	    \centering
	    \includegraphics[width=\textwidth]{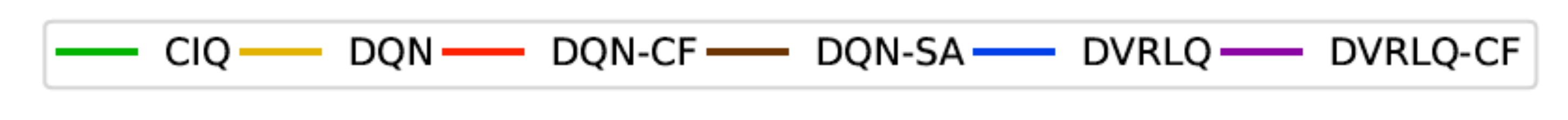}
	\end{subfigure}
	\quad
	\begin{subfigure}{0.22\textwidth} %
	    \centering
	    \includegraphics[width=\textwidth]{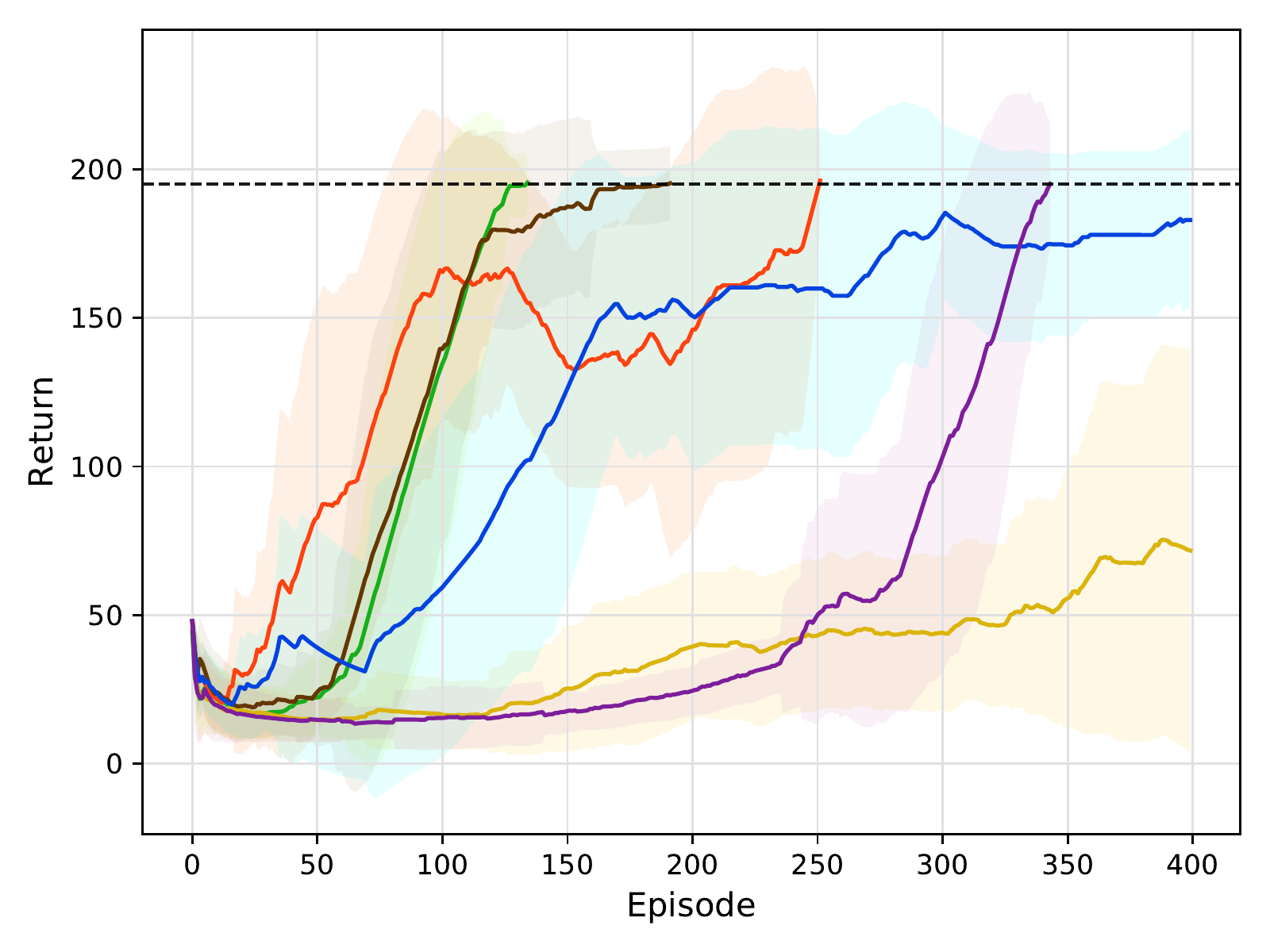}
	    \caption{Cartpole$^{adversarial}_{vector}$.} %
	\end{subfigure}
	\quad
	\begin{subfigure}{0.22\textwidth} %
	    \centering
		\includegraphics[width=\textwidth]{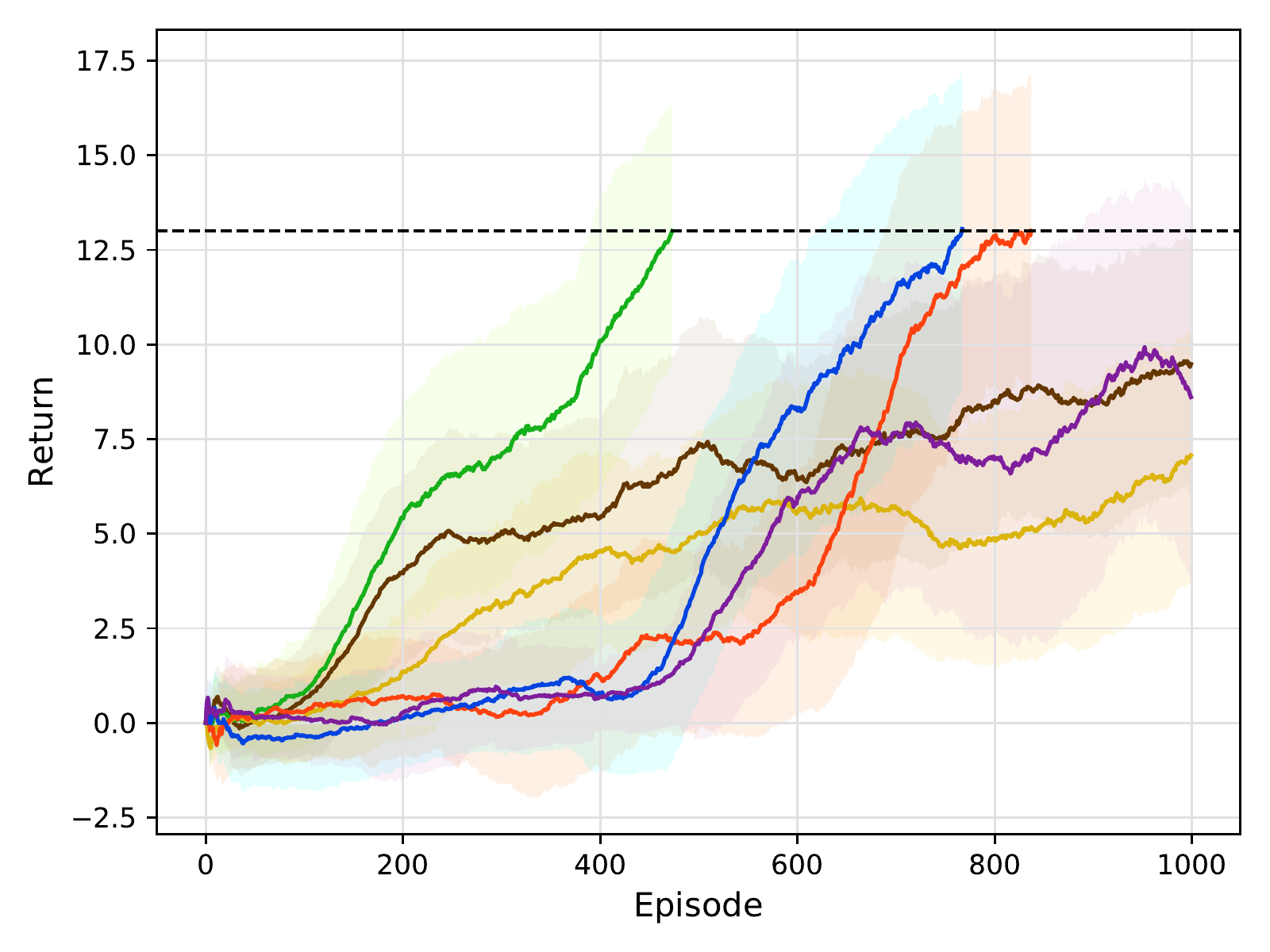}
		\caption{Banana$^{adversarial}$. } %
	\end{subfigure}
	\quad
	\begin{subfigure}{0.22\textwidth} %
	    \centering
		\includegraphics[width=\textwidth]{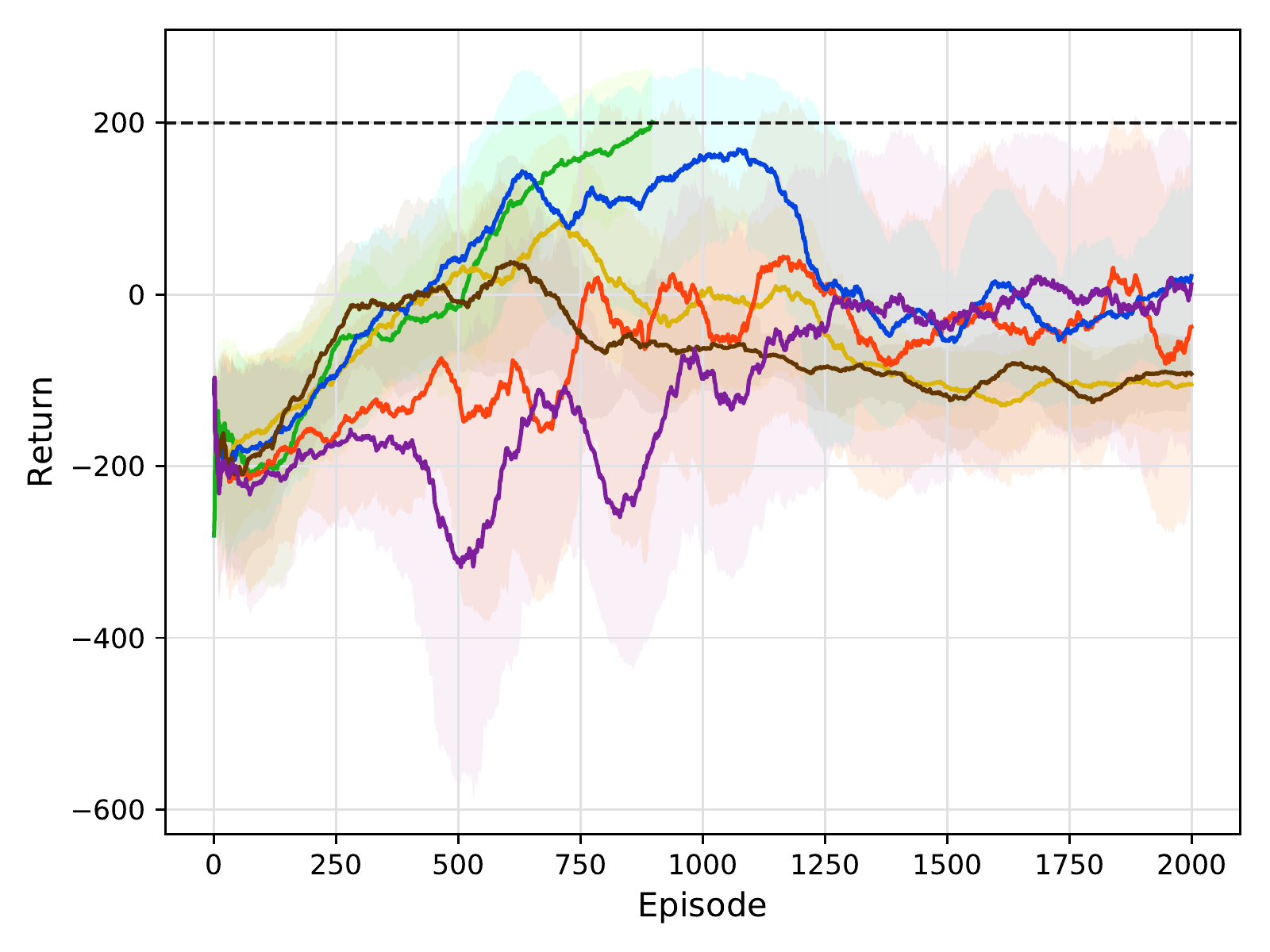}
		\caption{Lunar$^{adversarial}$.} %
	\end{subfigure}
	\quad %
	\begin{subfigure}{0.22\textwidth} %
	    \centering
		\includegraphics[width=\textwidth]{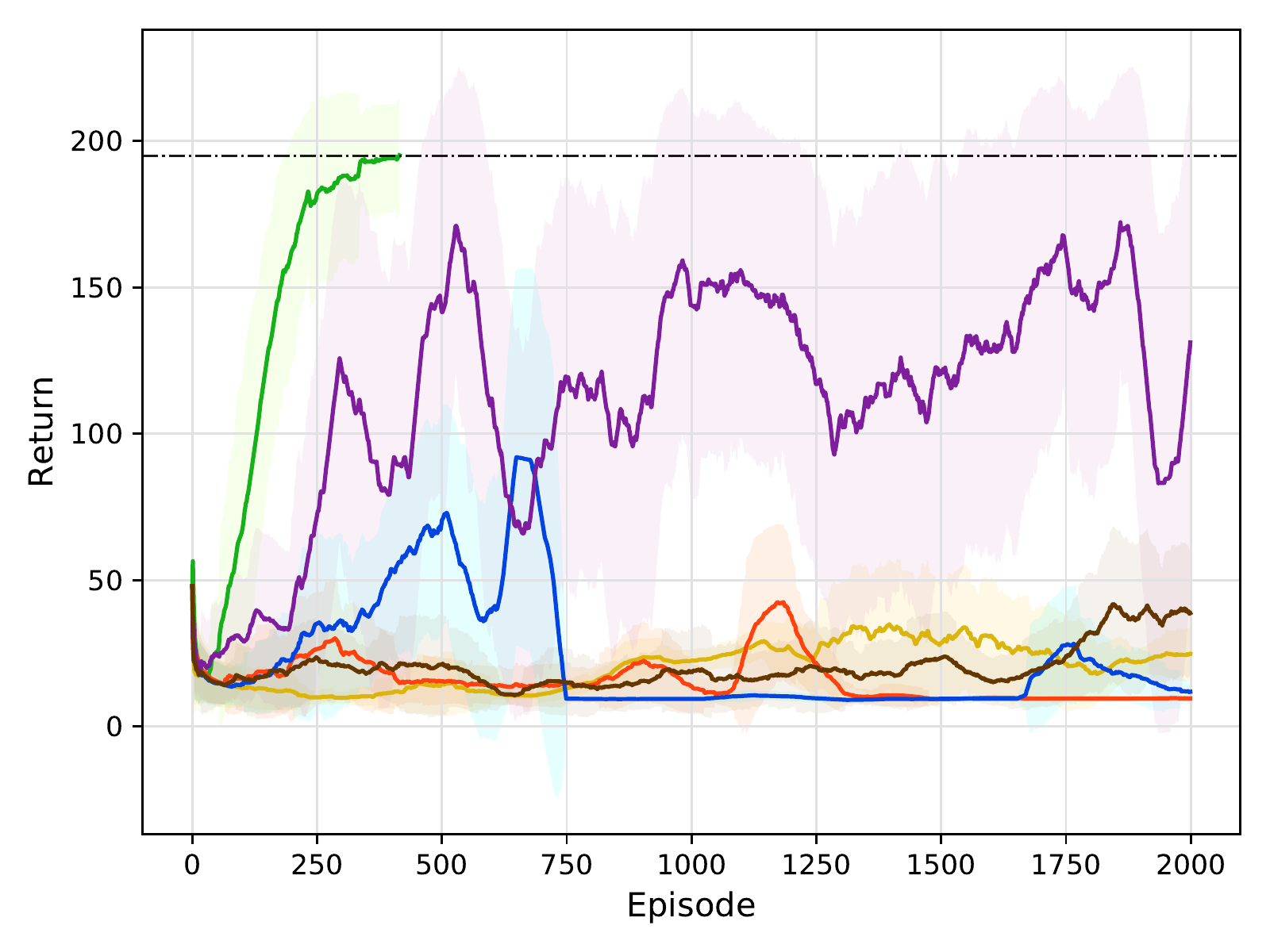}
		\caption{Cartpole$^{adversarial}_{pixel}$.} %
		\end{subfigure}
    \quad
	 \begin{subfigure}{0.22\textwidth} %
	    \centering
	    \includegraphics[width=\textwidth]{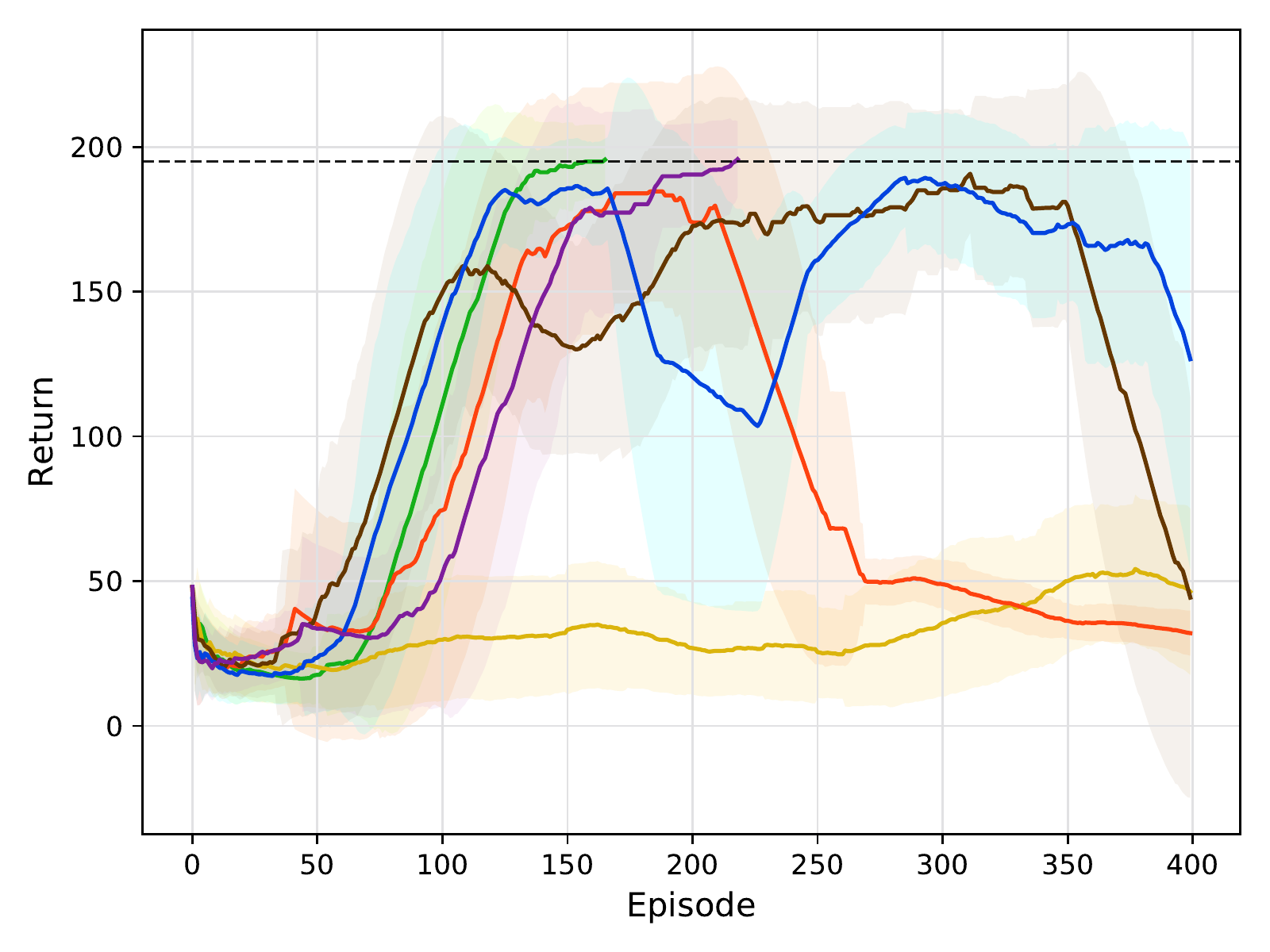}
	    \caption{Cartpole$^{blackout}_{vector}$.} %
	\end{subfigure}
	\quad~
	\begin{subfigure}{0.22\textwidth} %
	    \centering
		\includegraphics[width=\textwidth]{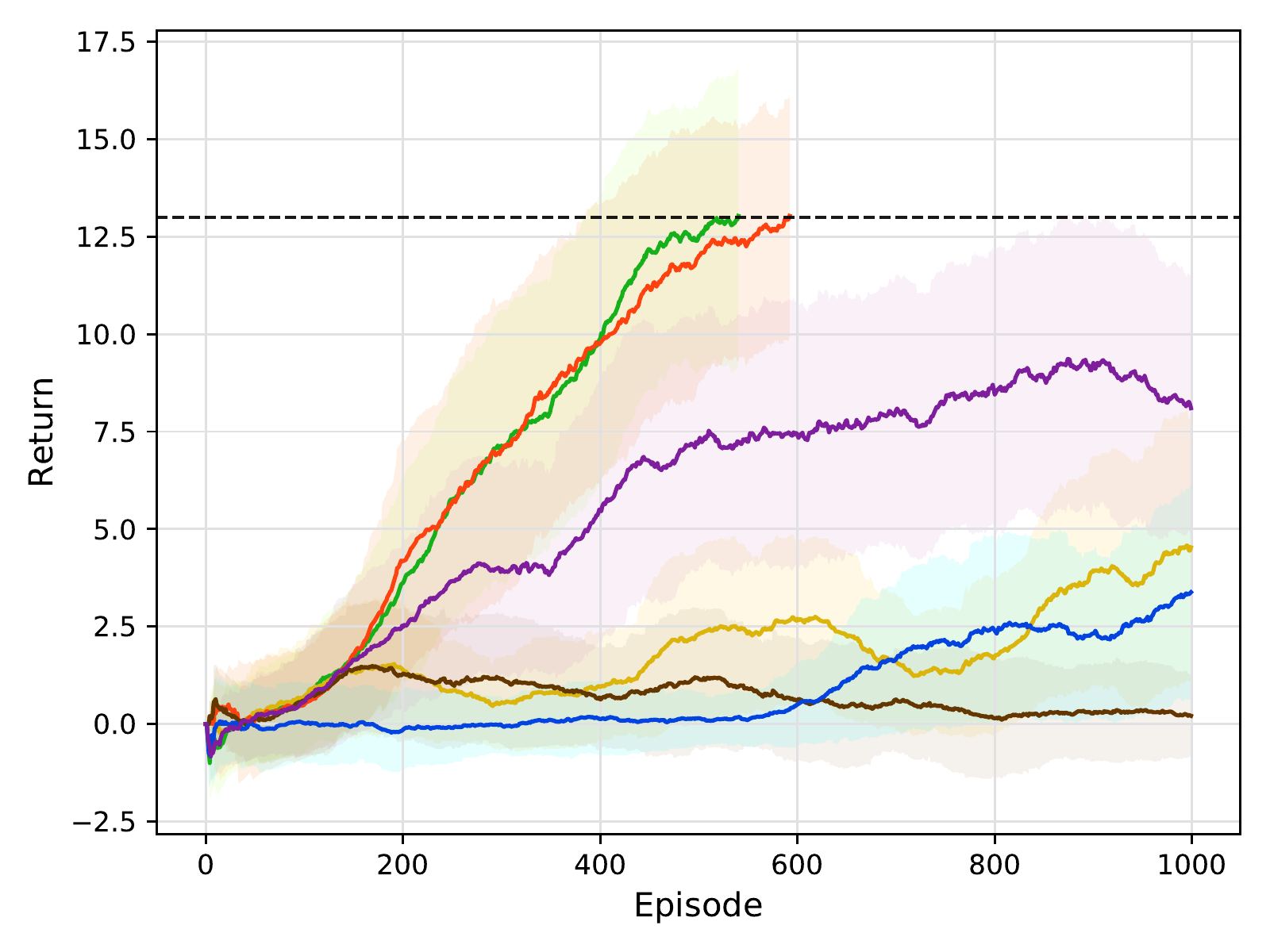}
		\caption{Banana$^{blackout}$. } %
	\end{subfigure}
	\quad~
	\begin{subfigure}{0.22\textwidth} %
	    \centering
		\includegraphics[width=\textwidth]{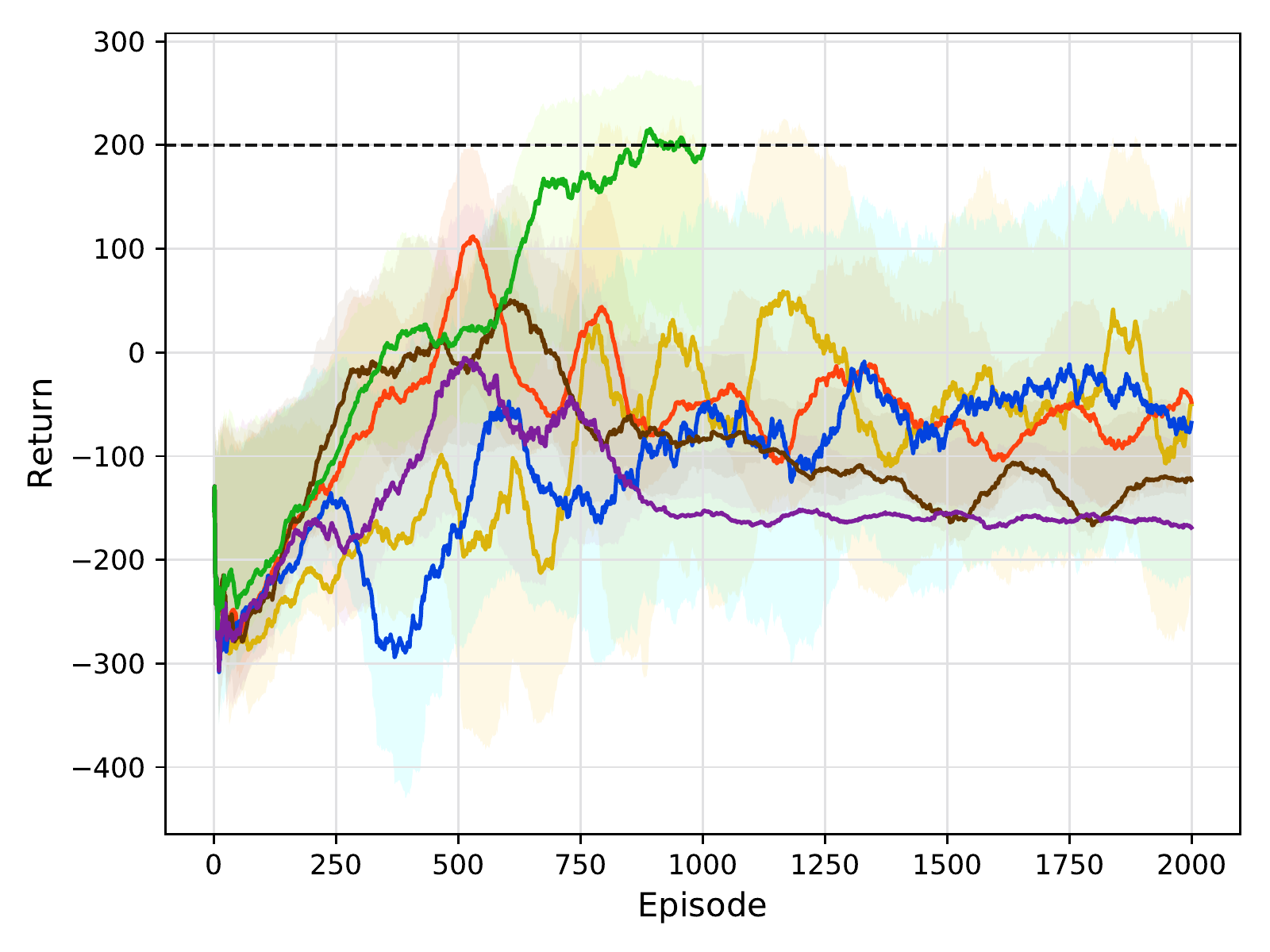}
		\caption{Lunar$^{blackout}$} %
	\end{subfigure}
	\quad~
	\begin{subfigure}{0.22\textwidth} %
	    \centering
		\includegraphics[width=\textwidth]{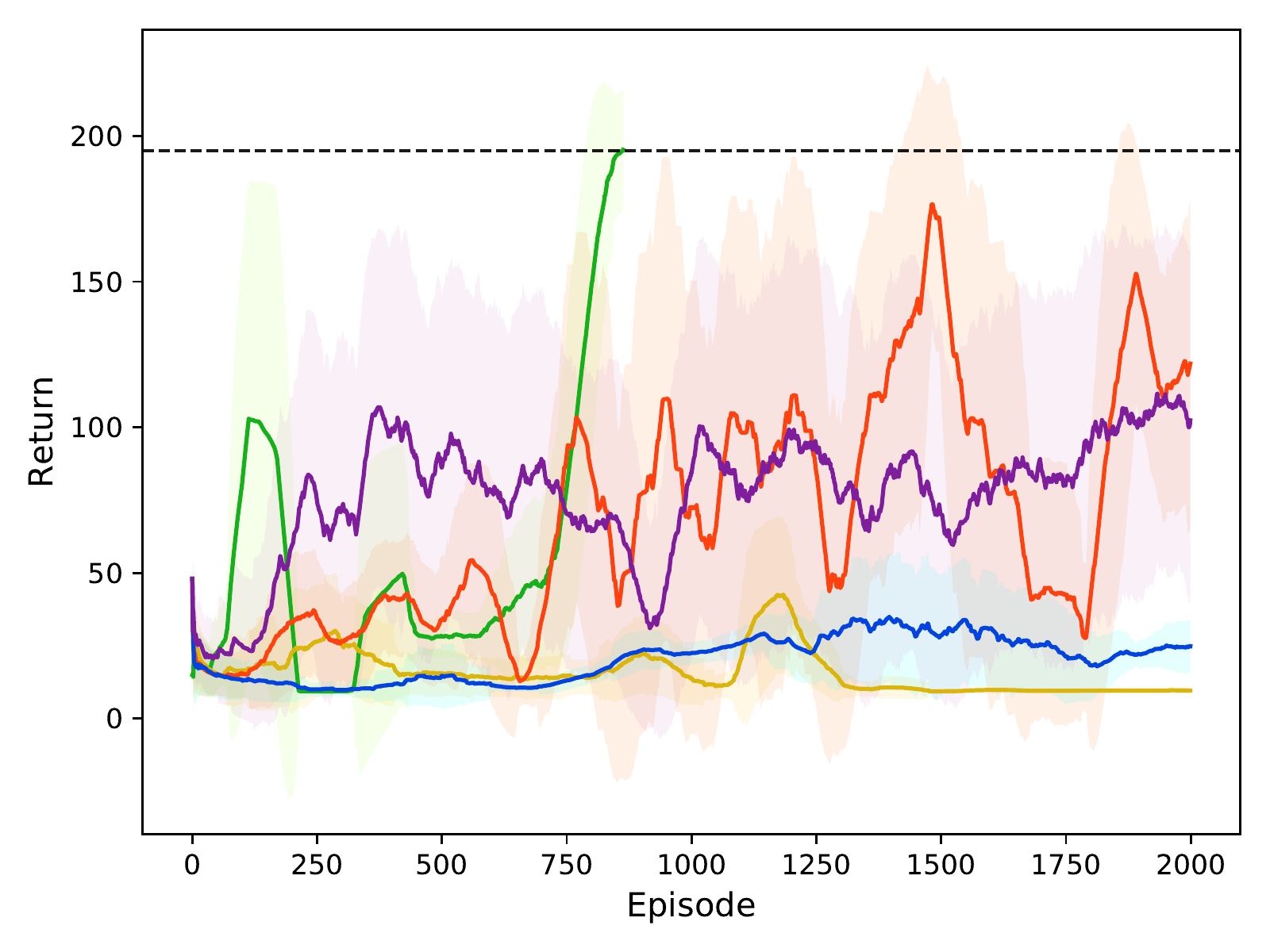}
		\caption{Cartpole$^{blackout}_{pixel}$.} %
	\end{subfigure}
	\caption{Performance of DQNs under potential (20\%) adversarial and black-out interference. } %
	\label{fig:figure:7:adv}
\end{figure}

\begin{table*}[ht!]
\centering
\caption{Performance resilience analysis of AC-Rate ($\uparrow$) and CLEVER-Q robustness score ($\uparrow$) under additive Gaussian ($l_{2}$-norm) and adversarial ($l_{\infty}$-norm) perturbations on state in the vector Cartpole environment. 
}
\begin{tabular}{|c|cc|cc|c|cc|cc|}
\hline
$\mathcal I$=L$_2$            & \multicolumn{2}{c|}{AC-Rate} & \multicolumn{2}{c|}{CLEVER-Q} & $\mathcal I$=L$_{\infty}$            & \multicolumn{2}{c|}{AC-Rate} & \multicolumn{2}{c|}{CLEVER-Q} \\ \hline
P\%,~$\mathcal I$ & DQN               & CIQ               & DQN           & CIQ           &  P\%,~$\mathcal I$ & DQN               & CIQ               & DQN           & CIQ           \\ \hline
10\%           & 82.10\%           & \textbf{99.61\%}           & 0.176         & \textbf{0.221}         &  10\%           & 62.23\%           & \textbf{99.52\% }          & 0.169         & \textbf{0.248}         \\ \cline{1-1} \cline{6-6}
20\%            & 72.15\%           & \textbf{98.52\%}           & 0.130         & \textbf{0.235}         &  20\%           & 9.68\%            & \textbf{98.52\%}           & 0.171         & \textbf{0.236}         \\ \cline{1-1} \cline{6-6}
30\%            & 69.74\%           & \textbf{98.12\%}           & 0.109         & \textbf{0.232}         &  30\%           & 1.22\%            & \textbf{98.10\%}           & 0.052         & \textbf{0.230}         \\ \hline
\end{tabular}
\label{tab:table:1}
\end{table*}
\subsection{Resilient RL on Average Returns}

We run performance evaluation with six different interference probabilities ($p^{I}$ in Sec. \ref{sub_41}), including $\left \{ 0\%, 10\%, 20\%, 30\%, 40\%, 50\% \right \}$. We train each agent $50$ times and highlight its standard deviation with lighter colors. Each agent is trained until the target score (shown as the dashed black line) is reached or until 400 episodes. We show the average returns for $p^{I}=20\%$ under adversarial perturbation and black-out in Figure \ref{fig:figure:7:adv} and report the rest of the results in Appendix B. %

CIQ (\textcolor{teal}{green}) clearly outperforms all the baselines under all types of interference, validating the effectiveness of our CIQ in learning to infer and gaining resilience against a wide range of observational interferences.
Pure DQN (yellow) cannot handle the interference with $20\%$ noise level.
DQN-CF (orange) and DQN-SA (brown) have competitive performance in some environments against certain interferences, but perform poorly in others. DVRLQ (blue) and DVRLQ-CF (purple) cannot achieve the target reward in most experiments and this might suggest the inefficiency  of applying a general POMDP approach in a framework with a specific structure of observational interference.

\subsection{Robustness Metrics based on Recording States}

We evaluate the robustness of DQN and CIQ by the proposed CLEVER-Q metric.
To make the test state environment consistent among different types and levels of interference, we record the interfered states, $S_{N} = \mathcal I(S_{C})$, together with their clean states, $S_{C}$. We then calculate the average CLEVER-Q for DQN and CIQ based on the clean states $S_{C}$ using Eq. \ref{eq:cleverq} over 50 times experiments for each agent.

We also consider a retrospective robustness metric, the action correction rate (AC-Rate).
Motivated by previous off-policy and error correction studies~\citep{dulac2012fast, harutyunyan2016q, lin2017tactics}, AC-Rate is defined as the action matching rate $R_{Act} = \frac{1}{T} \sum_{t=0}^{T-1} \mathbf{1}_{\{a_t = a^*_t\}}$ between $a_t$ and $a^*_t$ over an episode with length $T$. 
Here $a_t$ denotes the action taken by the agent with interfered observations $S_{N}$, and $a^*_t$ is the action of the agent if clean states $S_{C}$ were observed instead.

The roles of CLEVER-Q and AC-Rate are complementary as robustness metrics. CLEVER-Q measures sensitivity in terms of the margin (minimum perturbation) required for a given state to change the original action. AC-rate measures the utility in terms of action consistency. Altogether, they provide a comprehensive resilience assessment.

Table \ref{tab:table:1} reports the two robustness metrics for DQN and CIQ under two types of interference.  CIQ attains higher scores than DQN in both CLEVER-Q and AC-Rate, reflecting better resilience in CIQ evaluations. We provide more robustness measurements in Appendix A and D.

\subsection{Average Treatment Effect under Intervention}
\label{sup:d:causal:effect}
In a causal learning setting, evaluating treatment effects and conducting statistical refuting experiments are essential to support the underlying causal graphical model. Through resilient reinforcement learning framework, we could interpret DQN by estimating the average treatment effect (ATE) of each noisy and adversarial observation. 
We first define how to 
calculate a treatment effect in the resilient RL settings and conduct statistical refuting tests including random common cause variable test ($T_{c}$), replacing treatment with a random (placebo) variable ($T_{p}$), and removing a random subset of data ($T_{s}$). The open-source causal inference package Dowhy~\citep{sharma2019dowhy} is used for analysis.

We refine a Q-network with discrete actions for estimating treatment effects based on Theorem 1 in~\citep{louizos2017causal}.
In particular, individual treatment effect (ITE) can be defined as the difference between the two potential outcomes of a Q-network; and the average treatment effect (ATE) is the expected value of the potential outcomes over the subjects. 
In a binary treatment setting, for a Q-value function $Q_t(s_t)$ and the interfered state $\mathcal I(s_t)$, the ITE and ATE are calculated by:
\begin{align}
    Q_{t}^{ITE}=Q_{t}(s_t)\left(1-p_{t}\right)+Q_{t}({\mathcal I}(s_{t})) p_{t} 
   \\ 
   A T E= \sum_{t=1}^{\mathcal T}\frac{\mathbb{E}\left[Q_{t}^{ITE}({\mathcal I}(s_t))\right]-\mathbb{E}\left[Q_{t}^{ITE}(s_t)\right]}{\mathcal T}
\end{align}
where $p_t$ is the estimated inference label by the agent and $\mathcal T$ is the total time steps of each episode. As expected, we find that CIQ indeed attains a better ATE and its significance can be informed by the refuting tests based on
$T_c$, $T_{p}$ and $T_s$. We refer to Appendix C for more details.

\subsection{Additional analysis}
We also conduct the following analysis to better understand our CIQ model. Environments with a dynamic noise level are evaluated. Due to the space limit, see their details in appendix B to D. Furthermore, a discussion on the advantage of\textbf{ sample complexity benefited from sequential learning with interference labels} is included in Appendix B.

\textbf{Neural saliency map:} We apply the perturbation-based saliency map for DRL~\citep{greydanus2018visualizing} as shown in Figure \ref{fig:figure:4} and appendix \ref{sup:e:saliency:map}to visualize the saliency centers of CIQ and others, which is based on the Q-value of each model as interpretable studies.

\textbf{Treatment effect analysis:} We provide treatment effect analysis on each kind of interference to statistically verify the CGM with lowest errors on average treatment effect refutation in appendix C.

\textbf{Ablation studies:} We conduct ablation studies by comparing several CIQ variants, each without a certain CIQ component, and verify the importance of the proposed CIQ architecture in Appendix D for future studies.

\textbf{Test on different noise levels:} We train CIQ under one noise level and test on another level, which shows that the difference in noise level does not affect much on the performance of CIQ model reported in Appendix B.

\textbf{Transferability in robustness:} Based on CIQ, we study how well can the robustness of different interference types transfer between training and testing environments. We evaluate two general settings  (i) an identical interference type but different noise levels (Appendix D) and (ii) different interference types (Appendix D). Tab~\ref{tab:CIQ:MI} summarizes the results. 

\textbf{Multiple interference types: } We also provide a generalized version of CIQ that deals with multiple interference types in training and testing environments. Tab~\ref{tab:diffn:m} summarizes the results. 
The generalized CIQ is 
equipped with a common encoder and individual interference decoders to study multi-module conditional inference, with some additional discussion in Appendix E.

\begin{figure}[thb!]
\begin{center}
   \includegraphics[width=0.96\linewidth]{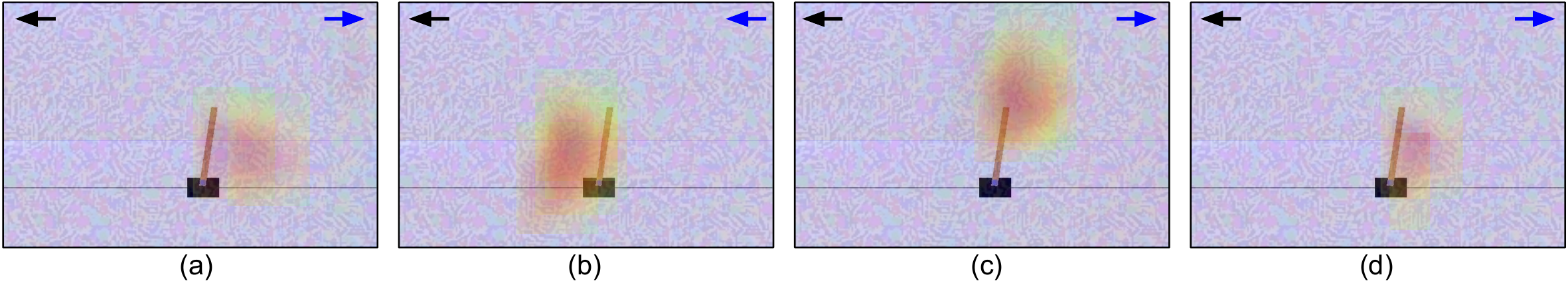}
\end{center}
\vspace{-0.4cm}
\caption{Perturbation-based saliency map on Pixel Cartpole under adversarial perturbation: (a) DQN, (b) CIQ, (c) DQN-CF; (d) DVRLQ-CF. The black arrows are correct actions and blue arrows are agents' actions. The neural saliency of CIQ makes more correct actions responding to ground actions.} 
\label{fig:figure:4}
\vspace{-0.2cm}
\end{figure}

\begin{table}[ht!]
\centering
\caption{Stability test of proposed CIQ (\emph{Train} Noise-Level, \emph{Test} Noise-Level). We consider settings with different training and testing noise levels for CIQ evaluation afterward. }
\begin{adjustbox}{width=0.47\textwidth}
\begin{tabular}{|l|llllll|}
\hline
Metrics & (0.1, 0.3) & (0.3, 0.1) & (0.3, 0.2) & (0.3, 0.3) & (0.3, 0.4) & (0.3, 0.5) \\ \hline
Performance & 182.8 &\textbf{195.0}& \textbf{195.0}& \textbf{195.0} &\textbf{195.0} & 185.7 \\ \hline
CLEVER-Q & 0.195 & 0.239 & 0.232 & 0.230 & 0.224 & 0.215  \\ \hline
AC-Rate & 0.914 & 0.985 & 0.986 & 0.995 & 0.984 & 0.924 \\ \hline
\end{tabular}
\end{adjustbox}
\label{tab:diffn:m}
\vspace{-0.3cm}
\end{table}

\begin{table}[ht!]
\centering
\caption{CIQ-MI: CIQ agent with an extended multi-interference (MI) architecture testing in Env$_1$ (noise level $P=20\%$). As a proof of concept, we consider two interference types together, Gaussian noise and adversarial perturbation. In this setting every observation (state) can possibly undergo an interference with either Gaussian noise or Adversarial perturbation. CIQ-MI is capable of making correct action to solve (over 195.0) the testing environment when training with mixed interference types.}
\begin{adjustbox}{width=0.47\textwidth}
\begin{tabular}{|l|c|c|c|}
\hline
Train  \textbf{/ Test} & \textbf{Gaussian} & \textbf{Adversarial} & \textbf{Gaussian + Adversarial} \\ \hline
Gaussian & \textbf{195.1} & 154.2 & 96.3 \\ \hline
Adversarial & 153.9 & \textbf{195.0} & 105.1 \\ \hline
Gaussian + Adversarial & \textbf{195.0} & \textbf{195.0} & \textbf{195.0} \\ \hline
\end{tabular}
\end{adjustbox}
\label{tab:CIQ:MI}
\end{table}

\section{Conclusion}
Our experiments suggest that, although some DQN-based DRL algorithms can achieve high scores under the normal condition, their performance can be severely degraded in the presence of interference. 
In order to be resilient against interference, we propose CIQ, a novel causal-inference-driven DRL algorithm. Evaluated on a wide range of environments and multiple types of interferences, the CIQ results show consistently superior performance over several RL baseline methods. We investigate the improved resilience of CIQ by CLEVER-Q and AC-Rate metrics. Our demo code is available at \small{\texttt{github.com/huckiyang/Obs-Causal-Q-Network}}. 
 
\clearpage
{\small
\bibliography{iclr2021_conference}

\begin{thebibliography}{84}
\providecommand{\natexlab}[1]{#1}

\bibitem[{Alshiekh et~al.(2018)Alshiekh, Bloem, Ehlers, K{\"o}nighofer, Niekum,
  and Topcu}]{alshiekh2018safe}
Alshiekh, M.; Bloem, R.; Ehlers, R.; K{\"o}nighofer, B.; Niekum, S.; and Topcu,
  U. 2018.
\newblock Safe reinforcement learning via shielding.
\newblock In \emph{Thirty-Second AAAI Conference on Artificial Intelligence}.

\bibitem[{Ammanabrolu and Riedl(2019)}]{ammanabrolu2019transfer}
Ammanabrolu, P.; and Riedl, M. 2019.
\newblock Transfer in Deep Reinforcement Learning Using Knowledge Graphs.
\newblock In \emph{Proceedings of the Thirteenth Workshop on Graph-Based
  Methods for Natural Language Processing (TextGraphs-13)}, 1--10.

\bibitem[{Bareinboim, Forney, and Pearl(2015)}]{bareinboim2015bandits}
Bareinboim, E.; Forney, A.; and Pearl, J. 2015.
\newblock Bandits with unobserved confounders: A causal approach.
\newblock In \emph{Advances in Neural Information Processing Systems},
  1342--1350.

\bibitem[{Bengio(2013)}]{bengio2013deep}
Bengio, Y. 2013.
\newblock Deep learning of representations: Looking forward.
\newblock In \emph{International Conference on Statistical Language and Speech
  Processing}, 1--37. Springer.

\bibitem[{Bennett et~al.(2021)Bennett, Kallus, Li, and
  Mousavi}]{bennett2021off}
Bennett, A.; Kallus, N.; Li, L.; and Mousavi, A. 2021.
\newblock Off-policy evaluation in infinite-horizon reinforcement learning with
  latent confounders.
\newblock In \emph{International Conference on Artificial Intelligence and
  Statistics}, 1999--2007. PMLR.

\bibitem[{Brockman et~al.(2016)Brockman, Cheung, Pettersson, Schneider,
  Schulman, Tang, and Zaremba}]{brockman2016openai}
Brockman, G.; Cheung, V.; Pettersson, L.; Schneider, J.; Schulman, J.; Tang,
  J.; and Zaremba, W. 2016.
\newblock Openai gym.
\newblock \emph{arXiv preprint arXiv:1606.01540}.

\bibitem[{Dabney et~al.(2018)Dabney, Rowland, Bellemare, and
  Munos}]{dabney2018distributional}
Dabney, W.; Rowland, M.; Bellemare, M.~G.; and Munos, R. 2018.
\newblock Distributional reinforcement learning with quantile regression.
\newblock In \emph{Thirty-Second AAAI Conference on Artificial Intelligence}.

\bibitem[{Dhariwal et~al.(2017)Dhariwal, Hesse, Klimov, Nichol, Plappert,
  Radford, Schulman, Sidor, Wu, and Zhokhov}]{baselines}
Dhariwal, P.; Hesse, C.; Klimov, O.; Nichol, A.; Plappert, M.; Radford, A.;
  Schulman, J.; Sidor, S.; Wu, Y.; and Zhokhov, P. 2017.
\newblock OpenAI Baselines.
\newblock \url{https://github.com/openai/baselines}.

\bibitem[{Dulac-Arnold et~al.(2012)Dulac-Arnold, Denoyer, Preux, and
  Gallinari}]{dulac2012fast}
Dulac-Arnold, G.; Denoyer, L.; Preux, P.; and Gallinari, P. 2012.
\newblock Fast reinforcement learning with large action sets using
  error-correcting output codes for mdp factorization.
\newblock In \emph{Joint European Conference on Machine Learning and Knowledge
  Discovery in Databases}, 180--194. Springer.

\bibitem[{Everett(2021)}]{everett2021neural}
Everett, M. 2021.
\newblock Neural Network Verification in Control.
\newblock \emph{arXiv preprint arXiv:2110.01388}.

\bibitem[{Forney, Pearl, and Bareinboim(2017)}]{forney2017counterfactual}
Forney, A.; Pearl, J.; and Bareinboim, E. 2017.
\newblock Counterfactual data-fusion for online reinforcement learners.
\newblock In \emph{International Conference on Machine Learning}, 1156--1164.

\bibitem[{Fortunato et~al.(2018)Fortunato, Azar, Piot, Menick, Osband, Graves,
  Mnih, Munos, Hassabis, Pietquin et~al.}]{fortunato2017noisy}
Fortunato, M.; Azar, M.~G.; Piot, B.; Menick, J.; Osband, I.; Graves, A.; Mnih,
  V.; Munos, R.; Hassabis, D.; Pietquin, O.; et~al. 2018.
\newblock Noisy networks for exploration.
\newblock \emph{ICLR 2018, arXiv preprint arXiv:1706.10295}.

\bibitem[{Fox, Pakman, and Tishby(2015)}]{fox2015taming}
Fox, R.; Pakman, A.; and Tishby, N. 2015.
\newblock Taming the noise in reinforcement learning via soft updates.
\newblock \emph{arXiv preprint arXiv:1512.08562}.

\bibitem[{Franklin et~al.(1998)Franklin, Powell, Workman
  et~al.}]{franklin1998digital}
Franklin, G.~F.; Powell, J.~D.; Workman, M.~L.; et~al. 1998.
\newblock \emph{Digital control of dynamic systems}, volume~3.
\newblock Addison-wesley Menlo Park, CA.

\bibitem[{Goodfellow, Shlens, and Szegedy(2015)}]{goodfellow2014explaining}
Goodfellow, I.~J.; Shlens, J.; and Szegedy, C. 2015.
\newblock Explaining and harnessing adversarial examples.
\newblock \emph{ICLR}.

\bibitem[{Greenland, Pearl, and Robins(1999)}]{greenland1999causal}
Greenland, S.; Pearl, J.; and Robins, J.~M. 1999.
\newblock Causal diagrams for epidemiologic research.
\newblock \emph{Epidemiology}, 37--48.

\bibitem[{Gregor et~al.(2018)Gregor, Papamakarios, Besse, Buesing, and
  Weber}]{gregor2018temporal}
Gregor, K.; Papamakarios, G.; Besse, F.; Buesing, L.; and Weber, T. 2018.
\newblock Temporal difference variational auto-encoder.
\newblock \emph{arXiv preprint arXiv:1806.03107}.

\bibitem[{Greydanus et~al.(2018)Greydanus, Koul, Dodge, and
  Fern}]{greydanus2018visualizing}
Greydanus, S.; Koul, A.; Dodge, J.; and Fern, A. 2018.
\newblock Visualizing and Understanding Atari Agents.
\newblock In \emph{International Conference on Machine Learning}, 1792--1801.

\bibitem[{Grigorescu et~al.(2020)Grigorescu, Trasnea, Cocias, and
  Macesanu}]{grigorescu2020survey}
Grigorescu, S.; Trasnea, B.; Cocias, T.; and Macesanu, G. 2020.
\newblock A survey of deep learning techniques for autonomous driving.
\newblock \emph{Journal of Field Robotics}, 37(3): 362--386.

\bibitem[{Gu et~al.(2017)Gu, Holly, Lillicrap, and Levine}]{gu2017deep}
Gu, S.; Holly, E.; Lillicrap, T.; and Levine, S. 2017.
\newblock Deep reinforcement learning for robotic manipulation with
  asynchronous off-policy updates.
\newblock In \emph{2017 IEEE international conference on robotics and
  automation (ICRA)}, 3389--3396. IEEE.

\bibitem[{Ha and Schmidhuber(2018)}]{ha2018world}
Ha, D.; and Schmidhuber, J. 2018.
\newblock World models.
\newblock \emph{arXiv preprint arXiv:1803.10122}.

\bibitem[{Hafner et~al.(2018)Hafner, Lillicrap, Fischer, Villegas, Ha, Lee, and
  Davidson}]{hafner2018learning}
Hafner, D.; Lillicrap, T.; Fischer, I.; Villegas, R.; Ha, D.; Lee, H.; and
  Davidson, J. 2018.
\newblock Learning latent dynamics for planning from pixels.
\newblock \emph{arXiv preprint arXiv:1811.04551}.

\bibitem[{Harutyunyan et~al.(2016)Harutyunyan, Bellemare, Stepleton, and
  Munos}]{harutyunyan2016q}
Harutyunyan, A.; Bellemare, M.~G.; Stepleton, T.; and Munos, R. 2016.
\newblock Q lamda with Off-Policy Corrections.
\newblock In \emph{International Conference on Algorithmic Learning Theory},
  305--320. Springer.

\bibitem[{Helwegen, Louizos, and Forr{\'e}(2020)}]{helwegen2020improving}
Helwegen, R.; Louizos, C.; and Forr{\'e}, P. 2020.
\newblock Improving Fair Predictions Using Variational Inference In Causal
  Models.
\newblock \emph{arXiv preprint arXiv:2008.10880}.

\bibitem[{Higgins et~al.(2018)Higgins, Amos, Pfau, Racaniere, Matthey, Rezende,
  and Lerchner}]{higgins2018towards}
Higgins, I.; Amos, D.; Pfau, D.; Racaniere, S.; Matthey, L.; Rezende, D.; and
  Lerchner, A. 2018.
\newblock Towards a definition of disentangled representations.
\newblock \emph{arXiv preprint arXiv:1812.02230}.

\bibitem[{Huang et~al.(2017)Huang, Papernot, Goodfellow, Duan, and
  Abbeel}]{huang2017adversarial}
Huang, S.; Papernot, N.; Goodfellow, I.; Duan, Y.; and Abbeel, P. 2017.
\newblock Adversarial attacks on neural network policies.
\newblock \emph{arXiv preprint arXiv:1702.02284}.

\bibitem[{Igl et~al.(2018)Igl, Zintgraf, Le, Wood, and Whiteson}]{igl2018deep}
Igl, M.; Zintgraf, L.; Le, T.~A.; Wood, F.; and Whiteson, S. 2018.
\newblock Deep Variational Reinforcement Learning for POMDPs.
\newblock In \emph{International Conference on Machine Learning}, 2117--2126.

\bibitem[{Imbens and Rubin(2010)}]{imbens2010rubin}
Imbens, G.~W.; and Rubin, D.~B. 2010.
\newblock Rubin causal model.
\newblock In \emph{Microeconometrics}, 229--241. Springer.

\bibitem[{Jaber, Zhang, and Bareinboim(2019)}]{jaber2019causal}
Jaber, A.; Zhang, J.; and Bareinboim, E. 2019.
\newblock Causal identification under markov equivalence: Completeness results.
\newblock In \emph{International Conference on Machine Learning}, 2981--2989.

\bibitem[{Jaderberg et~al.(2017)Jaderberg, Mnih, Czarnecki, Schaul, Leibo,
  Silver, and Kavukcuoglu}]{jaderberg2017reinforcement}
Jaderberg, M.; Mnih, V.; Czarnecki, W.~M.; Schaul, T.; Leibo, J.~Z.; Silver,
  D.; and Kavukcuoglu, K. 2017.
\newblock Reinforcement learning with unsupervised auxiliary tasks.
\newblock \emph{ICLR}.

\bibitem[{Johansen et~al.(2015)Johansen, Cristofaro, S{\o}rensen, Hansen, and
  Fossen}]{johansen2015estimation}
Johansen, T.~A.; Cristofaro, A.; S{\o}rensen, K.; Hansen, J.~M.; and Fossen,
  T.~I. 2015.
\newblock On estimation of wind velocity, angle-of-attack and sideslip angle of
  small UAVs using standard sensors.
\newblock In \emph{2015 International Conference on Unmanned Aircraft Systems
  (ICUAS)}, 510--519. IEEE.

\bibitem[{Juliani et~al.(2018)Juliani, Berges, Vckay, Gao, Henry, Mattar, and
  Lange}]{juliani2018unity}
Juliani, A.; Berges, V.-P.; Vckay, E.; Gao, Y.; Henry, H.; Mattar, M.; and
  Lange, D. 2018.
\newblock Unity: A general platform for intelligent agents.
\newblock \emph{arXiv preprint arXiv:1809.02627}.

\bibitem[{Jung, Tian, and Bareinboim(2021)}]{jung2021estimating}
Jung, Y.; Tian, J.; and Bareinboim, E. 2021.
\newblock Estimating identifiable causal effects through double machine
  learning.
\newblock In \emph{Proceedings of the 35th AAAI Conference on Artificial
  Intelligence}.

\bibitem[{Kaelbling, Littman, and Cassandra(1998)}]{kaelbling1998planning}
Kaelbling, L.~P.; Littman, M.~L.; and Cassandra, A.~R. 1998.
\newblock Planning and acting in partially observable stochastic domains.
\newblock \emph{Artificial intelligence}, 101(1-2): 99--134.

\bibitem[{Kalashnikov et~al.(2018)Kalashnikov, Irpan, Pastor, Ibarz, Herzog,
  Jang, Quillen, Holly, Kalakrishnan, Vanhoucke et~al.}]{kalashnikov2018qt}
Kalashnikov, D.; Irpan, A.; Pastor, P.; Ibarz, J.; Herzog, A.; Jang, E.;
  Quillen, D.; Holly, E.; Kalakrishnan, M.; Vanhoucke, V.; et~al. 2018.
\newblock Qt-opt: Scalable deep reinforcement learning for vision-based robotic
  manipulation.
\newblock \emph{arXiv preprint arXiv:1806.10293}.

\bibitem[{Khemakhem et~al.(2021)Khemakhem, Monti, Leech, and
  Hyvarinen}]{khemakhem2021causal}
Khemakhem, I.; Monti, R.; Leech, R.; and Hyvarinen, A. 2021.
\newblock Causal autoregressive flows.
\newblock In \emph{International Conference on Artificial Intelligence and
  Statistics}, 3520--3528. PMLR.

\bibitem[{Killian, Ghassemi, and Joshi(2020)}]{killian2020counterfactually}
Killian, T.~W.; Ghassemi, M.; and Joshi, S. 2020.
\newblock Counterfactually Guided Policy Transfer in Clinical Settings.
\newblock \emph{arXiv preprint arXiv:2006.11654}.

\bibitem[{Kingma and Ba(2014)}]{kingma2014adam}
Kingma, D.~P.; and Ba, J. 2014.
\newblock Adam: A method for stochastic optimization.
\newblock \emph{arXiv preprint arXiv:1412.6980}.

\bibitem[{Kingma and Welling(2013)}]{kingma2013auto}
Kingma, D.~P.; and Welling, M. 2013.
\newblock Auto-encoding variational bayes.
\newblock \emph{arXiv preprint arXiv:1312.6114}.

\bibitem[{Lee et~al.(2018)Lee, Hou, Mandalika, Lee, Choudhury, and
  Srinivasa}]{lee2018bayesian}
Lee, G.; Hou, B.; Mandalika, A.; Lee, J.; Choudhury, S.; and Srinivasa, S.~S.
  2018.
\newblock Bayesian policy optimization for model uncertainty.
\newblock \emph{arXiv preprint arXiv:1810.01014}.

\bibitem[{Lin et~al.(2017)Lin, Hong, Liao, Shih, Liu, and Sun}]{lin2017tactics}
Lin, Y.-C.; Hong, Z.-W.; Liao, Y.-H.; Shih, M.-L.; Liu, M.-Y.; and Sun, M.
  2017.
\newblock Tactics of adversarial attack on deep reinforcement learning agents.
\newblock \emph{arXiv preprint arXiv:1703.06748}.

\bibitem[{Louizos et~al.(2017)Louizos, Shalit, Mooij, Sontag, Zemel, and
  Welling}]{louizos2017causal}
Louizos, C.; Shalit, U.; Mooij, J.~M.; Sontag, D.; Zemel, R.; and Welling, M.
  2017.
\newblock Causal effect inference with deep latent-variable models.
\newblock In \emph{Advances in Neural Information Processing Systems},
  6446--6456.

\bibitem[{Lu, Sch{\"o}lkopf, and
  Hern{\'a}ndez-Lobato(2018)}]{lu2018deconfounding}
Lu, C.; Sch{\"o}lkopf, B.; and Hern{\'a}ndez-Lobato, J.~M. 2018.
\newblock Deconfounding reinforcement learning in observational settings.
\newblock \emph{arXiv preprint arXiv:1812.10576}.

\bibitem[{Lynch et~al.(2020)Lynch, Khansari, Xiao, Kumar, Tompson, Levine, and
  Sermanet}]{lynch2020learning}
Lynch, C.; Khansari, M.; Xiao, T.; Kumar, V.; Tompson, J.; Levine, S.; and
  Sermanet, P. 2020.
\newblock Learning latent plans from play.
\newblock In \emph{Conference on Robot Learning}, 1113--1132. PMLR.

\bibitem[{Madumal et~al.(2020)Madumal, Miller, Sonenberg, and
  Vetere}]{madumal2020explainable}
Madumal, P.; Miller, T.; Sonenberg, L.; and Vetere, F. 2020.
\newblock Explainable reinforcement learning through a causal lens.
\newblock In \emph{Proceedings of the AAAI Conference on Artificial
  Intelligence}, volume~34, 2493--2500.

\bibitem[{Mirowski et~al.(2016)Mirowski, Pascanu, Viola, Soyer, Ballard,
  Banino, Denil, Goroshin, Sifre, Kavukcuoglu et~al.}]{mirowski2016learning}
Mirowski, P.; Pascanu, R.; Viola, F.; Soyer, H.; Ballard, A.~J.; Banino, A.;
  Denil, M.; Goroshin, R.; Sifre, L.; Kavukcuoglu, K.; et~al. 2016.
\newblock Learning to navigate in complex environments.
\newblock \emph{arXiv preprint arXiv:1611.03673}.

\bibitem[{Mnih et~al.(2016)Mnih, Badia, Mirza, Graves, Lillicrap, Harley,
  Silver, and Kavukcuoglu}]{mnih2016asynchronous}
Mnih, V.; Badia, A.~P.; Mirza, M.; Graves, A.; Lillicrap, T.; Harley, T.;
  Silver, D.; and Kavukcuoglu, K. 2016.
\newblock Asynchronous methods for deep reinforcement learning.
\newblock In \emph{International conference on machine learning}, 1928--1937.

\bibitem[{Mnih et~al.(2015)Mnih, Kavukcuoglu, Silver, Rusu, Veness, Bellemare,
  Graves, Riedmiller, Fidjeland, Ostrovski et~al.}]{mnih2015human}
Mnih, V.; Kavukcuoglu, K.; Silver, D.; Rusu, A.~A.; Veness, J.; Bellemare,
  M.~G.; Graves, A.; Riedmiller, M.; Fidjeland, A.~K.; Ostrovski, G.; et~al.
  2015.
\newblock Human-level control through deep reinforcement learning.
\newblock \emph{Nature}, 518(7540): 529.

\bibitem[{Moreno et~al.(2018)Moreno, Humplik, Papamakarios, Pires, Buesing,
  Heess, and Weber}]{moreno2018neural}
Moreno, P.; Humplik, J.; Papamakarios, G.; Pires, B.~A.; Buesing, L.; Heess,
  N.; and Weber, T. 2018.
\newblock Neural belief states for partially observed domains.
\newblock In \emph{NeurIPS 2018 workshop on Reinforcement Learning under
  Partial Observability}.

\bibitem[{Nagabandi et~al.(2018)Nagabandi, Clavera, Liu, Fearing, Abbeel,
  Levine, and Finn}]{nagabandi2018learning}
Nagabandi, A.; Clavera, I.; Liu, S.; Fearing, R.~S.; Abbeel, P.; Levine, S.;
  and Finn, C. 2018.
\newblock Learning to adapt in dynamic, real-world environments through
  meta-reinforcement learning.
\newblock \emph{arXiv preprint arXiv:1803.11347}.

\bibitem[{Osband et~al.(2019)Osband, Doron, Hessel, Aslanides, Sezener,
  Saraiva, McKinney, Lattimore, Szepezvari, Singh et~al.}]{osband2019behaviour}
Osband, I.; Doron, Y.; Hessel, M.; Aslanides, J.; Sezener, E.; Saraiva, A.;
  McKinney, K.; Lattimore, T.; Szepezvari, C.; Singh, S.; et~al. 2019.
\newblock Behaviour suite for reinforcement learning.
\newblock \emph{arXiv preprint arXiv:1908.03568}.

\bibitem[{Papadimitriou and Tsitsiklis(1987)}]{papadimitriou1987complexity}
Papadimitriou, C.~H.; and Tsitsiklis, J.~N. 1987.
\newblock The complexity of Markov decision processes.
\newblock \emph{Mathematics of operations research}, 12(3): 441--450.

\bibitem[{Pearl(1995{\natexlab{a}})}]{pearl1995causal}
Pearl, J. 1995{\natexlab{a}}.
\newblock Causal diagrams for empirical research.
\newblock \emph{Biometrika}, 82(4): 669--688.

\bibitem[{Pearl(1995{\natexlab{b}})}]{pearl1995testability}
Pearl, J. 1995{\natexlab{b}}.
\newblock On the testability of causal models with latent and instrumental
  variables.
\newblock In \emph{Proceedings of the Eleventh conference on Uncertainty in
  artificial intelligence}, 435--443. Morgan Kaufmann Publishers Inc.

\bibitem[{Pearl(2009)}]{pearl2009causality}
Pearl, J. 2009.
\newblock \emph{Causality}.
\newblock Cambridge university press.

\bibitem[{Pearl(2019)}]{pearl2019seven}
Pearl, J. 2019.
\newblock The seven tools of causal inference, with reflections on machine
  learning.
\newblock \emph{Communications of the ACM}, 62(3): 54--60.

\bibitem[{Pearl, Glymour, and Jewell(2016)}]{pearl2016causal}
Pearl, J.; Glymour, M.; and Jewell, N.~P. 2016.
\newblock \emph{Causal inference in statistics: A primer}.
\newblock John Wiley \& Sons.

\bibitem[{Raghunathan et~al.(2019)Raghunathan, Xie, Yang, Duchi, and
  Liang}]{raghunathan2019adversarial}
Raghunathan, A.; Xie, S.~M.; Yang, F.; Duchi, J.~C.; and Liang, P. 2019.
\newblock Adversarial training can hurt generalization.
\newblock \emph{arXiv preprint arXiv:1906.06032}.

\bibitem[{Robins, Rotnitzky, and Zhao(1995)}]{robins1995analysis}
Robins, J.~M.; Rotnitzky, A.; and Zhao, L.~P. 1995.
\newblock Analysis of semiparametric regression models for repeated outcomes in
  the presence of missing data.
\newblock \emph{Journal of the american statistical association}, 90(429):
  106--121.

\bibitem[{Rothman and Greenland(2005)}]{rothman2005causation}
Rothman, K.~J.; and Greenland, S. 2005.
\newblock Causation and causal inference in epidemiology.
\newblock \emph{American journal of public health}, 95(S1): S144--S150.

\bibitem[{Rubin(1974)}]{rubin1974estimating}
Rubin, D.~B. 1974.
\newblock Estimating causal effects of treatments in randomized and
  nonrandomized studies.
\newblock \emph{Journal of educational Psychology}, 66(5): 688.

\bibitem[{Saunders et~al.(2018)Saunders, Sastry, Stuhlmueller, and
  Evans}]{saunders2018trial}
Saunders, W.; Sastry, G.; Stuhlmueller, A.; and Evans, O. 2018.
\newblock Trial without error: Towards safe reinforcement learning via human
  intervention.
\newblock In \emph{Proceedings of the 17th International Conference on
  Autonomous Agents and MultiAgent Systems}, 2067--2069. International
  Foundation for Autonomous Agents and Multiagent Systems.

\bibitem[{Schmidhuber(1991)}]{schmidhuber1991reinforcement}
Schmidhuber, J. 1991.
\newblock Reinforcement learning in Markovian and non-Markovian environments.
\newblock In \emph{Advances in neural information processing systems},
  500--506.

\bibitem[{Schmidhuber(1992)}]{schmidhuber1992learning}
Schmidhuber, J. 1992.
\newblock Learning complex, extended sequences using the principle of history
  compression.
\newblock \emph{Neural Computation}, 4(2): 234--242.

\bibitem[{Shalit, Johansson, and Sontag(2017)}]{shalit2017estimating}
Shalit, U.; Johansson, F.~D.; and Sontag, D. 2017.
\newblock Estimating individual treatment effect: generalization bounds and
  algorithms.
\newblock In \emph{Proceedings of the 34th International Conference on Machine
  Learning-Volume 70}, 3076--3085. JMLR. org.

\bibitem[{Sharma, Kiciman et~al.(2019)}]{sharma2019dowhy}
Sharma, A.; Kiciman, E.; et~al. 2019.
\newblock Do{W}hy A Python package for causal inference.
\newblock \emph{KDD 2019 workshop}.

\bibitem[{Silver et~al.(2017)Silver, Schrittwieser, Simonyan, Antonoglou,
  Huang, Guez, Hubert, Baker, Lai, Bolton et~al.}]{silver2017mastering}
Silver, D.; Schrittwieser, J.; Simonyan, K.; Antonoglou, I.; Huang, A.; Guez,
  A.; Hubert, T.; Baker, L.; Lai, M.; Bolton, A.; et~al. 2017.
\newblock Mastering the game of Go without human knowledge.
\newblock \emph{Nature}, 550(7676): 354.

\bibitem[{Su et~al.(2018)Su, Zhang, Chen, Yi, Chen, and Gao}]{su2018robustness}
Su, D.; Zhang, H.; Chen, H.; Yi, J.; Chen, P.-Y.; and Gao, Y. 2018.
\newblock Is robustness the cost of accuracy?--a comprehensive study on the
  robustness of 18 deep image classification models.
\newblock In \emph{ECCV}, 631--648.

\bibitem[{Sutton et~al.(1998)Sutton, Barto, Bach
  et~al.}]{sutton1998reinforcement}
Sutton, R.~S.; Barto, A.~G.; Bach, F.; et~al. 1998.
\newblock \emph{Reinforcement learning: An introduction}.
\newblock MIT press.

\bibitem[{Szegedy et~al.(2014)Szegedy, Zaremba, Sutskever, Bruna, Erhan,
  Goodfellow, and Fergus}]{szegedy2013intriguing}
Szegedy, C.; Zaremba, W.; Sutskever, I.; Bruna, J.; Erhan, D.; Goodfellow, I.;
  and Fergus, R. 2014.
\newblock Intriguing properties of neural networks.
\newblock \emph{International Conference on Learning Representations}.

\bibitem[{Tai, Paolo, and Liu(2017)}]{tai2017virtual}
Tai, L.; Paolo, G.; and Liu, M. 2017.
\newblock Virtual-to-real deep reinforcement learning: Continuous control of
  mobile robots for mapless navigation.
\newblock In \emph{2017 IEEE/RSJ International Conference on Intelligent Robots
  and Systems (IROS)}, 31--36. IEEE.

\bibitem[{Tennenholtz, Mannor, and Shalit(2019)}]{tennenholtz2019off}
Tennenholtz, G.; Mannor, S.; and Shalit, U. 2019.
\newblock Off-Policy Evaluation in Partially Observable Environments.
\newblock \emph{arXiv preprint arXiv:1909.03739}.

\bibitem[{Van~Hasselt, Guez, and Silver(2016)}]{van2016deep}
Van~Hasselt, H.; Guez, A.; and Silver, D. 2016.
\newblock Deep reinforcement learning with double q-learning.
\newblock In \emph{Thirtieth AAAI conference on artificial intelligence}.

\bibitem[{Weng et~al.(2018)Weng, Zhang, Chen, Yi, Su, Gao, Hsieh, and
  Daniel}]{weng2018evaluating}
Weng, T.-W.; Zhang, H.; Chen, P.-Y.; Yi, J.; Su, D.; Gao, Y.; Hsieh, C.-J.; and
  Daniel, L. 2018.
\newblock Evaluating the robustness of neural networks: An extreme value theory
  approach.
\newblock \emph{arXiv preprint arXiv:1801.10578}.

\bibitem[{Yan, Xu, and Liu(2016)}]{yan2016can}
Yan, C.; Xu, W.; and Liu, J. 2016.
\newblock Can you trust autonomous vehicles: Contactless attacks against
  sensors of self-driving vehicle.
\newblock \emph{DEFCON24}.

\bibitem[{Yang et~al.(2020{\natexlab{a}})Yang, Liu, Gandhe, Gu, Raju,
  Filimonov, and Bulyko}]{yang2020multi}
Yang, C.-H.~H.; Liu, L.; Gandhe, A.; Gu, Y.; Raju, A.; Filimonov, D.; and
  Bulyko, I. 2020{\natexlab{a}}.
\newblock Multi-task Language Modeling for Improving Speech Recognition of Rare
  Words.
\newblock \emph{arXiv preprint arXiv:2011.11715}.

\bibitem[{Yang et~al.(2019)Yang, Liu, Chen, Ma, and Tsai}]{yang2019causal}
Yang, C.-H.~H.; Liu, Y.-C.; Chen, P.-Y.; Ma, X.; and Tsai, Y.-C.~J. 2019.
\newblock When causal intervention meets adversarial examples and image masking
  for deep neural networks.
\newblock In \emph{2019 IEEE International Conference on Image Processing
  (ICIP)}, 3811--3815. IEEE.

\bibitem[{Yang et~al.(2020{\natexlab{b}})Yang, Qi, Chen, Ouyang, Hung, Lee, and
  Ma}]{yang2020enhanced}
Yang, C.-H.~H.; Qi, J.; Chen, P.-Y.; Ouyang, Y.; Hung, I.-T.~D.; Lee, C.-H.;
  and Ma, X. 2020{\natexlab{b}}.
\newblock Enhanced Adversarial Strategically-Timed Attacks Against Deep
  Reinforcement Learning.
\newblock In \emph{ICASSP 2020-2020 IEEE International Conference on Acoustics,
  Speech and Signal Processing (ICASSP)}, 3407--3411. IEEE.

\bibitem[{Yurtsever et~al.(2020)Yurtsever, Lambert, Carballo, and
  Takeda}]{yurtsever2020survey}
Yurtsever, E.; Lambert, J.; Carballo, A.; and Takeda, K. 2020.
\newblock A survey of autonomous driving: Common practices and emerging
  technologies.
\newblock \emph{IEEE Access}, 8: 58443--58469.

\bibitem[{Zhang et~al.(2020)Zhang, Lyle, Sodhani, Filos, Kwiatkowska, Pineau,
  Gal, and Precup}]{zhang2020invariant}
Zhang, A.; Lyle, C.; Sodhani, S.; Filos, A.; Kwiatkowska, M.; Pineau, J.; Gal,
  Y.; and Precup, D. 2020.
\newblock Invariant causal prediction for block mdps.
\newblock In \emph{International Conference on Machine Learning}, 11214--11224.
  PMLR.

\bibitem[{Zhang, Zhang, and Li(2020)}]{zhang2020causala}
Zhang, C.; Zhang, K.; and Li, Y. 2020.
\newblock A Causal View on Robustness of Neural Networks.
\newblock \emph{Advances in Neural Information Processing Systems}, 33.

\bibitem[{Zhang and Bareinboim(2020)}]{zhang2020designing}
Zhang, J.; and Bareinboim, E. 2020.
\newblock Designing optimal dynamic treatment regimes: A causal reinforcement
  learning approach.
\newblock In \emph{International Conference on Machine Learning}, 11012--11022.
  PMLR.

\bibitem[{Zhang and Bareinboim(2021)}]{zhang2021bounding}
Zhang, J.; and Bareinboim, E. 2021.
\newblock Bounding Causal Effects on Continuous Outcome.
\newblock In \emph{Proceedings of the 35nd AAAI Conference on Artificial
  Intelligence}.

\bibitem[{Zhang, Kumor, and Bareinboim(2020)}]{zhang2020causalb}
Zhang, J.; Kumor, D.; and Bareinboim, E. 2020.
\newblock Causal imitation learning with unobserved confounders.
\newblock \emph{Advances in Neural Information Processing Systems}, 33.

\end{thebibliography}
}
\appendix
\onecolumn
\section{Appendix}
Our supplementary sections included: 
\begin{itemize}
\item \textbf{A.} Proof of the CLEVER-Q Theorem and Additional Robustness Measurements
\item \textbf{B.} Implementation Details and Additional Results
\item \textbf{C.} Causal Relation Evaluation and Average Treatment Effects in CIQ 
Networks
\item \textbf{D.} Ablation Studies
\end{itemize}

\section{A. Proof of the CLEVER-Q Theorem and Additional Robustness Measurements}

\subsection{Proof of the CLEVER-Q Theorem}
\label{sup:b:proof}

Here we provide a comprehensive score (CLEVER-Q) for evaluating the robustness of a Q-network model by extending the CLEVER robustness score ~\citep{weng2018evaluating} designed for classification tasks to Q-network based DRL tasks. Consider an $\ell_p$-norm bounded ($p \geq 1$) perturbation $\delta$ to the state $s_t$. We first derive a lower bound $\beta_L$ on the minimal perturbation to $s_t$ for altering the action with the top Q-value, i.e., the greedy action. For a given $s_t$ and a Q-network, this lower bound $\beta_L$ provides a robustness guarantee that the greedy action at $s_t$ will be the same as that of \textit{any} perturbed state $s_t+\delta$, as long as the perturbation level $\|\delta\|_p \leq \beta_L$. Therefore, the larger the value $\beta_L$ is, the more resilience of the Q-network against perturbations can be guaranteed. Our CLEVER-Q score uses the extreme value theory to evaluate the lower bound  $\beta_L$ as a robustness metric for benchmarking different Q-network models.

\begin{theorem}

Consider a Q-network $Q(s,a)$ and a state $s_t$. Let $\mathcal{A}^* = \argmax_{a} Q(s_t, a)$ be the set of greedy (best) actions having the highest Q-value at $s_t$ according to the Q-network. Define $g_a(s_t) = Q(s_t, \mathcal{A}^*) - Q(s_t, a)$ for 
every action $a$, where $Q(s_t, \mathcal{A}^*)$ denotes the best Q-value at $s_t$.
Assume $g_a(s_t)$ is locally Lipschitz continuous\footnote{Here locally Lipschitz continuous means $g_a(s_t)$ is Lipschitz continuous within the $\ell_p$ ball centered at $s_t$ with radius $R_p$. We follow the same definition as in \citep{weng2018evaluating}.} with its local Lipschitz constant denoted by $L_q^a$, where $1/p+1/q = 1$ and $p \geq 1$. Then for any $p\geq 1$, define the lower bound
\begin{align*}
   \beta_{L}= min_{a \notin  \mathcal{A}^*} g_a(s_t) / L_q^a.
\end{align*}
Then for any $\delta $ such that $\|\delta\|_p \leq \beta_L$, 
\begin{align*}
   \argmax_{a} Q(s_t, a) = \argmax_{a} Q(s_t + \delta, a)
\end{align*}
\end{theorem}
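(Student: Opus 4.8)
The plan is to show that as long as $\|\delta\|_p \leq \beta_L$, the gap function $g_a(s_t+\delta)$ remains strictly positive for every non-greedy action $a \notin \mathcal{A}^*$, which forces the argmax at the perturbed state to stay within $\mathcal{A}^*$. The argument is essentially a one-line application of the local Lipschitz property combined with H\"older's inequality, mirroring the proof of the original CLEVER bound in \citep{weng2018evaluating}.

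First I would fix an arbitrary $a \notin \mathcal{A}^*$ and an arbitrary perturbation $\delta$ with $\|\delta\|_p \leq \beta_L$. Since $\beta_L \leq g_a(s_t)/L_q^a \leq R_p$ (the radius of the $\ell_p$ ball on which $g_a$ is assumed Lipschitz), the point $s_t + \delta$ lies inside that ball, so the local Lipschitz estimate applies: $|g_a(s_t + \delta) - g_a(s_t)| \leq L_q^a \|\delta\|_p$. Here I would invoke the standard fact that for a function differentiable (or merely Lipschitz) on the ball, the best local Lipschitz constant with respect to $\|\cdot\|_p$ is controlled by the supremum of the dual norm $\|\nabla g_a\|_q$ with $1/p + 1/q = 1$, which is exactly the $L_q^a$ in the statement, and that H\"older's inequality gives $|\langle \nabla g_a, \delta\rangle| \leq \|\nabla g_a\|_q \|\delta\|_p$; this is what makes the Lipschitz bound tight in the $\ell_p$ geometry.

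Next I would chain the two inequalities:
\begin{align*}
g_a(s_t + \delta) \;\geq\; g_a(s_t) - L_q^a \|\delta\|_p \;\geq\; g_a(s_t) - L_q^a \beta_L \;\geq\; g_a(s_t) - L_q^a \cdot \frac{g_a(s_t)}{L_q^a} \;=\; 0,
\end{align*}
where the last inequality uses $\beta_L = \min_{a' \notin \mathcal{A}^*} g_{a'}(s_t)/L_q^{a'} \leq g_a(s_t)/L_q^a$. Strictness (so that $a$ is not tied for the top) comes from noting $g_a(s_t) > 0$ for $a \notin \mathcal{A}^*$ by definition of $\mathcal{A}^*$, so as long as $\|\delta\|_p < \beta_L$ the inequality is strict; for the boundary case $\|\delta\|_p = \beta_L$ one argues that the minimizing action still has non-negative gap and the remaining greedy actions are unaffected, so the argmax set cannot move outside $\mathcal{A}^*$. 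Recalling $g_a(s) = Q(s, \mathcal{A}^*) - Q(s,a) = \max_{a'} Q(s,a') - Q(s,a)$ evaluated at $s = s_t + \delta$ — using that the value $Q(s_t+\delta, \mathcal{A}^*)$ tracks the actual best Q-value at the perturbed state as in \citep{weng2018evaluating} — the conclusion $g_a(s_t+\delta) \geq 0$ for all $a \notin \mathcal{A}^*$ says precisely that no non-greedy action overtakes the greedy ones, hence $\argmax_a Q(s_t+\delta, a) = \argmax_a Q(s_t, a)$.

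The only genuinely delicate point is the bookkeeping around $g_a$ being defined via $Q(s_t, \mathcal{A}^*)$, the best Q-value at the \emph{original} state, rather than at $s_t + \delta$: one must be careful that the Lipschitz hypothesis on $g_a$ is what transfers the ordering, and that the quantity $Q(s_t, \mathcal{A}^*) - Q(s_t+\delta, a)$ versus $\max_{a'}Q(s_t+\delta,a') - Q(s_t+\delta,a)$ are reconciled correctly; this is handled exactly as in the classification CLEVER proof by treating each pairwise gap separately and taking the minimum over $a \notin \mathcal{A}^*$. Everything else is routine, so I expect no real obstacle beyond reproducing this standard argument in the Q-network notation.
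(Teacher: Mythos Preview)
Your proposal is correct and follows essentially the same approach as the paper's proof: apply the local Lipschitz bound (justified via H\"older's inequality) to get $g_a(s_t+\delta)\geq g_a(s_t)-L_q^a\|\delta\|_p$, then use $\|\delta\|_p\leq \beta_L\leq g_a(s_t)/L_q^a$ to conclude $g_a(s_t+\delta)\geq 0$ for every $a\notin\mathcal{A}^*$, and finally take the minimum over non-greedy actions. If anything, you are more careful than the paper about the boundary case $\|\delta\|_p=\beta_L$ and about the bookkeeping of whether $g_a$ references the best Q-value at $s_t$ or at $s_t+\delta$; the paper simply asserts that $g_a(s_t+\delta)\geq 0$ means ``$\mathcal{A}^*$ still remains as the top Q-value action set'' without unpacking it.
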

\begin{proof}
Because $g_a(s_t)$ is locally Lipschitz continuous, by Holder's inequality, we have  
\begin{equation}
\label{eqn_aa}
    |g_a(x)-g_a(y)| \leq  L_{q}^a ||x-y||_{p},
\end{equation}
for any $x,y$ within the $R_p$-ball centered at $s_t$.
Now let $x=s_t$ and $ y = s_t + \delta$, where $\delta$ is some perturbation. Then    
\begin{equation}
g_a(s_t) - L_{q}^a ||\delta||_{p}  \leq  g_a(s_t+\delta) \leq  g_a(s_t) + L_{q}^a. ||\delta||_{p}
\end{equation}
Note that if $g_a(s_t+\delta) \geq 0$, then $A^*$ still remains as the top Q-value action set at state $s_t+\delta$. Moreover, $g_a(s_t) - L_q^a ||\delta||_p \geq 0$ implies  $g_a(s_t+\delta) \geq 0$. Therefore,
\begin{equation}
    ||\delta||_{p} \leq g_a(s_t) / L_{q}^a,  
\label{eqn_LB_2}
\end{equation}
provides a robustness guarantee that ensures 
$Q(s_t+\delta, \mathcal{A}^*) \geq Q(s_t+\delta, a)$ for any $\delta$ satisfying Eq. \eqref{eqn_aa}. 
 Finally, to provide a robustness guarantee that $Q(s_t+\delta, \mathcal{A}^*) \geq Q(s_t+\delta, a)$ for any action $a \notin \mathcal{A}^*$, it suffices to take the minimum value of the bound (for each $a$) in  Eq. \eqref{eqn_aa} over all actions other than $a^*$, which gives the lower bound 
\begin{equation}
\beta_L = min_ {a \notin A^*} g_a(s_t) / L^a_q 
\end{equation} 
\label{eq:4}
\end{proof}

For computing $\beta_L$, while the numerator is easy to obtain, the local Lipschitz constant $L_q^a$ cannot be directly computed. In our implementation, by using the fact that $L_q^a$ is equivalent to the local maximum gradient norm (in $\ell_q$ norm),
we use the same sampling technique from extreme value theory as proposed in \citep{weng2018evaluating} for estimating $L_q^a$.

\begin{algorithm}
    \caption{CIQ Training}\label{CIA_training}
    \begin{algorithmic}[1]
    \STATE Inputs: $Agent$, $NoisyEnv$, $Oracle$, $max\_step$,
     $NoisyEnv\_test$, $target$, $eval\_steps$
    \STATE Initialize: $t = 0$, $score = 0$, $s'_t = NoisyEnv$.reset() 
    \WHILE {$t < max\_step$ \AND $score < target$} 
        \STATE $i_t$ = $oracle$($NoisyEnv$, $t$)
        \STATE $a_t$ = $Agent$.act($s_t'$, $i_t$)
        \STATE $s_{t+1}'$, $r_t$, $done$ = $NoisyEnv$.step($a_t$)
        \STATE $Agent$.learn($s_t'$, $a_t$, $r_t$, $s_{t+1}'$, $i_t$)
        \IF {$t \in eval\_steps$}
            \STATE $score$ = $Agent$.evaluate($NoisyEnv\_test$)
        \ENDIF
        \IF {$done$}
            \STATE $s_t'$ = $NoisyEnv$.reset()
        \ELSE
            \STATE $s_t'$ = $s_{t+1}'$
        \ENDIF
        
        \STATE $t$ = $t$ + 1
    \ENDWHILE
    \STATE Return $Agent$
    \end{algorithmic}
\end{algorithm}

\subsection{Background and Training Setting}
\label{sup:c:traing:setting}
To scale to high-dimensional problems, one can use a parameterized deep neural network $Q(s, a; \theta)$ to approximate the Q-function, and the network $Q(s, a; \theta)$ is referred to as the deep Q-network (DQN).
The DQN algorithm \citep{mnih2015human} updates parameter $\theta$ according to the loss function:
\begin{align*}
    L^{\text{DQN}}(\theta)= \mathbb E_{(s_t, a_t, r_t, s_{t+1}) \sim D}
    \Big[
    (y^{\text{DQN}}_t - Q(s_t, a_t; \theta))^2
    \Big]
\end{align*}
where the transitions $(s_t, a_t, r_t, s_{t+1})$ are uniformly sampled from the replay buffer $D$ of previously observed transitions, and
$
    y^{\text{DQN}}_t = r_t + \gamma \max_{a} Q(s_{t+1}, a; \theta^{-})
$
is the DQN target with $\theta^{-}$ being the target network parameter periodically updated by $\theta$. 

Double DQN (DDQN) \citep{van2016deep} further improves the performance by modifying the target to 
$
    y^{\text{DDQN}}_t = r_t + \gamma Q(s_{t+1}, \argmax_{a}Q(s_{t+1}, a; \theta) ; \theta^{-}).
$
Prioritized replay is another DQN improvement which samples transitions $(s_t, a_t, r_t, s_{t+1})$ from the replay buffer according to the probabilities $p_t$ proportional to their temporal difference (TD) error:
$
    p_t \propto |y^{\text{DDQN}}_t - Q(s_t, a_t; \theta) |^\alpha
$
where $\alpha$ is a hyperparameter.

We use Pytorch 1.2 to design both DQN and causal inference Q (CIQ) networks in our experiments. Our code can be found in the supplementary material.
We use Nvidia GeForce RTX 2080 Ti GPUs with CUDA 10.0 for our experiments. 
We use the Quantile Huber loss~\citep{dabney2018distributional} $\mathcal{L}_{\kappa}$ for DQN models with $\kappa = 1$ in Sup-Eq. \ref{eq:s:qh:2}, which allows less dramatic changes from Huber loss: 
\begin{equation} 
\mathcal{L}_{\kappa}(u)=\left\{\begin{array}{ll}
{\frac{1}{2} u^{2},} & {\text { if }|u| \leq \kappa} \\
{\kappa\left(|u|-\frac{1}{2} \kappa\right),} & {\text { otherwise }}
\end{array}\right.
\label{eq:s:1}
\end{equation}
The quantile Huber loss~\citep{dabney2018distributional} is the asymmetric variant of the Huber loss for quantile $\tau \in[0,1]$ from Sup-Eq. \ref{eq:s:1}:
\begin{equation}
\rho_{\tau}^{\kappa}(u)=\left|\tau-\delta_{\{u<0\}}\right| \mathcal{L}_{\kappa}(u).
\label{eq:s:qh:2}
\end{equation}

After the a maximum update step in the temporal loss $u$ in Sup-Eq.~\ref{eq:s:1}, we synchronize $\theta_{i}^{-}$ with $\theta_{i}$ follow the implementation from the OpenAI baseline~\citep{baselines} in Sup-Eq~\ref{eq:s:2}:
\begin{equation}
u_{i}\left(\theta_{i}\right)=\mathbb{E}_{ }\left(\underbrace{
y^{\text{DDQN}}
}_{\theta_{\text {target }}}-\underbrace{Q\left(s, a ; \theta_{i}\right)}_{\theta_{\text {local }}}\right)^{2}.
\label{eq:s:2}
\end{equation}
We use the soft-update~\citep{fox2015taming} to update the DQN target network as in Sup-Eq~\ref{eq:s:3}: 
\begin{equation}
    \theta_{\text{local}} = \tau \times \theta_{\text{local}} + (1 - \tau) \times\theta_{\text{target}},
\label{eq:s:3}
\end{equation}
where $\theta_{\text{target}}$ and $\theta_{\text{local}}$ represent the two neural networks in DQN and $\tau$ is the soft update parameter depending on the task.

For each environment, in additional to the 5 baselines described in Section 4.2, we also evaluate the performance of common DQN improvements such as deep double Q-networks (DDQN) for DDQN with dueling (DDQN$_{d}$), DDQN with a prioritized replay (DDQN$_{p}$), DDQN with a joint-training interference classifier (DDQN-CF), and DDQN with a safe action reply (DDQN-SA).
We test each model against four types of interference, Gaussian, Adversarial, Blackout, and Frozen Frame, with $p^I \in  [10\%, 20\%, 30\%, 40\%, 50\%]$. We also consider a non-stationary noise-level sampling from a cosine-wave in a range of [0\%, 30\%] for every ten steps. CIQ shows a better and continuous performance to solve the environments before the noise level attaining 40\% and under the cosine-noise. Compared to variational based DQNs methods, joint-trained DDQN-CF show a much obvious advantages when the noise levels are in the range of 40\% to 50\%.

\subsection{Env$_{1}$: Cartpole Environment.}
We use a four-layer neural network, which included an input layer, two 32-unit wide ReLU hidden layers, and an output layer (2 dimensions). The observation dimension of Cartpole-v1~\citep{brockman2016openai} is 4 and the input stacks 4 consecutive observations.
The dimension of the input layer is [$4\times4$]. We design a replay buffer with a memory of 100,000, with a mini-batch size of 32, the discount factor $\gamma$ is set to 0.99, the $\tau$ for a soft update of target parameter is $5\times10^{-3}$, a learning rate for Adam~\citep{kingma2014adam} optimization is $5\times10^{-4}$, a regularization term for weight decay is $1\times{10^{-4}}$, 
the coefficient $\alpha$ for importance sampling exponent is 0.6, the coefficient of prioritization exponent is 0.4.

\subsection{Env$_{2}$: 3D Banana Collector Environment.}
We utilize the Unity Machine Learning Agents Toolkit~\citep{juliani2018unity}, which is an open-source\footnote{Source: https://github.com/Unity-Technologies/ml-agents} and reproducible 3D rendering environment for the task of Banana Collector. We use  open-source graphic rendering version~\footnote{Source: https://github.com/udacity/deep-reinforcement-learning/tree/master/p1$\_$navigation} with Unity backbone~\citep{juliani2018unity} for reproducible DQN experiments, which is designed to render the collector agent for both Linux and Windows systems. A reward of $+1$ is provided for collecting a yellow banana, and a reward of $-1$ is provided for collecting a blue banana.
We use a six-layer deep network, which includes an input layer, three 64-unit fully-connected ReLU hidden layers, and an output layer (2 dimensions). We use [$37 \times 4$] for our input layer, which composes from the observation dimension (37) and the stacked input of 4 consecutive observations.
We design a replay buffer with a memory of 100,000, with a mini-batch size of 32, the discount factor $\gamma$ is equal to 0.99, the $\tau$ for a soft update of target parameter is $10^{-3}$,
a learning rate for Adam~\citep{kingma2014adam} optimization is $5\times10^{-4}$, a regularization term for weight decay is $1\times{10^{-4}}$, 
the coefficient $\alpha$ for importance sampling exponent is 0.6, the coefficient of prioritization exponent is 0.4.

\subsection{Env$_{3}$: Lunar Lander Environment.}
The lunar lander-v2~\citep{brockman2016openai} is one of the most challenging environments with discrete actions.
The observation dimension of Lunar Lander-v2~\citep{brockman2016openai} is 8 and the input stacks 10 consecutive observations. The objective of the game is to navigate the lunar lander spaceship to a targeted landing spot without a collision. A collection of six discrete actions controls two real-valued vectors ranging from -1 to +1. The first dimension controls the main engine on and off numerically, and the second dimension throttles from 50\% to 100\% power. The following two actions represent for firing left, and the last two actions represent for firing the right engine.  
The dimension of the input layer is [$8\times10$].
We design a 7-layers neural network for this task, which includes 1 input layer, 2 layer of 32 unit wide fully-connected ReLU network, 2 layers deep 64-unit wide ReLU networks, (for all DQNs), 1 layer of 16 unit wide fully-connected ReLU network, and 1 output layer (4 dimensions).
The replay buffer size is 500,000; the minimum batch size is 64, the discount factor is 0.99, the $\tau$ for a soft update of target parameters is $10^{-3}$, the learning rate is $5\times10^{-4}$, the minimal step for reset memory buffer is 50. 
We train each model 1,000 times for each case and report the mean of the average final performance (average over all types of interference). Env$_{3}$ is a challenging task owing to often receive negative reward during the training. We thus consider a non-stationary noise-level sampling from a cosine-wave in a narrow range of [0\%, 20\%] for every ten steps. Results suggest CIQ could still solve the environment before the noise-level going over to 30\%. For the various noisy test, CIQ attains a best performance over 200.0 the other DQNs algorithms (skipping the table since only CIQ and DQN-CF have solved the environment over 200.0 training with adversarial and blackout interference.)

\subsection{Env$_{4}$: Pixel Cartpole Environment}
To observe pixel inputs of Cartpole-v1 as states, we use a screen-wrapper with an original size of [400, 600, 3]. We first resize the original frame into a single gray-scale channel, [100, 150] from the RGN2GRAY function in the OpenCV. The implementation details are shown in the "pixel\_tool.py" and "cartpole\_pixel.py", which could be refereed to the submitted supplementary code. Then we stack 4 consecutive gray-scale frames as the input.
We design a 7-layer DQN model, which included input layer, the first hidden layer convolves 32 filters of a [$8\times8$] kernel with stride 4, the second hidden layer convolves 64 filters of a [$4\times4$] kernel with stride 2, the third layer is a fully-connected layer with 128 units, from fourth to fifth layers are fully-connected layer with 64 units, and the output layer (2 dimensions). The replay buffer size is 500,000; the minimum batch size is 32, the discount factor is 0.99, the $\tau$ for a soft update of target parameters is $10^{-3}$, the learning rate is $5\times10^{-4}$, the minimal step for reset memory buffer is 1000.

\subsection{Train and Test on Different Noise Level}
\label{sup:c:diff:noise}
We consider settings with different training and testing noise levels for CIQ evaluation. The (train, test)\% case trains with train\% noise then tests with test\% noise. We observe that CIQ have the capability of learning transformable q-value estimation, which attain a succeed score of 195.00 in the noise level 30 $\pm$ 10\%. Meanwhile, other DQNs methods included DDQN-CF, DVRLQ-CF, DDQN-SA perform a general performance decay in the test on different noise level. This result would be limited to the generalization of power and challenges~\citep{bengio2013deep, higgins2018towards} in as disentangle unseen state of a single parameterized deep network.

\subsection{Markovian Noise}
\label{sup:sec:markov}
We also provide a dynamic noise study for CIQ training with i.i.d. Gaussian interference, $p^{\mathcal I}=0.2$; testing with Markov distribution $P(i_t\!=\!1|i_{t-1}\!=\!1)\!=\! 0.55$, 
$P(i_t\!=\!1|i_{t-1}\!=\!0)\!=\!0.05$, stationary $p^{\mathcal I}=0.1$ testing in Env$_{1}$ and Env$_{2}$. This experiment shows the learning power against unseen Markovian interference in Table~\ref{tab:markov}, which further confirms CIQ's ability against unseen interference distribution and dynamics. 

\begin{table}[ht!]
\centering
\caption{CIQ training against unseen interference distribution and dynamics
}
\label{tab:markov}
\begin{tabular}{|c|c|c|c|c|c|c|c|}
\hline
Model       & DQN   & CIQ   & DQN-CF & DQN-SA & DVRLQ  & DQN-VAE  & DQN-CEVAE\\ \hline
Env$_1$ Markov & 112.3 & \textbf{195.0} & 181.4  & 131.4  & 112.1& 163.7& 155.6 \\ \hline
Env$_2$ Markov & 9.4   & \textbf{12.1}  & 11.7   & 9.1    & 11.5 & 11.2 & 11.2\\ \hline
\end{tabular}
\end{table}

\subsection{Advantages of Training with Interference Labels}

 We provide an example to analytically demonstrate the learning advantage of having the interference labels during training.
 Consider an environment of i.i.d. Bernoulli states $s_t = x_t$ with $P(x_t = 1) = P(x_t = 0) = 0.5$ and two actions $0$ and $1$.
 There is no reward taking action $a_t = 0$.
 When $a_t = 1$, the agent pays one unit to have
 a chance to win a two unit reward with probability $q_{x}$ at state $x_t  = x \in\{0, 1\}$.
 Therefore, $P(r_t = 1|x_t=x, a_t=1) = q_{x}$ and $P(r_t = - 1|x_t=x, a_t=1) = 1 - q_{x}$. This simple environment is a contextual bandit problem where the optimal policy is to pick $a_t = 1$ at state $x_t = x$ if $q_x > 0.5$, and $a_t = 0$ if $q_x \leq 0.5$.
 If the goal is to find an approximately optimal policy, the agent should take action $a_t = 1$ during training to learn the probabilities $q_0$ and $q_1$.
 Suppose the environment is subjected to observation black-out $x'_t = 0$ with $p^{\mathcal I} = 0.2$ when $x_t = 1$, and no interference when $x_t = 0$.
 Assume $q_0 = (3-q_1) / 5$. Then we have $P(r_t = 1 | x'_t = 1, a_t = 1) = q_1$, and 
 $P(r_t = 1 | x'_t = 0, a_t = 1) = 
 q_0P(x_t = 0| x'_t = 0) + q_1P(x_t = 1| x'_t = 0)
 = 0.5$.
If the agent only has the interfered observation $x'_t$, the samples for $x'_t = 0$ are irrelevant to learning $q_1$ because rewards just randomly occur with probability half given $x'_t = 0$.
Therefore, the sample complexity bound is proportional to $1 / P(x'_t = 1)$ because only samples with $x'_t = 1$ are relevant.
On the other hand, if the agent has access to the labels $i_t$ during training, even when observed $x'_t = 0$, the agent can infer whether $x_t = 1$ by checking $i_t = 1$ or not. Therefore, the causal relation allows the agent to learn $q_1$ by utilizing all samples with $x_t = 1$, and the \textbf{sample complexity bound is proportional} to $1 / P(x_t = 1) =  2$ which is a \textbf{20\% reduction} from $1 / P(x'_t = 1) =  2.5$ \textbf{when the labels are not available. }

Note that $z_t = (x_t, i_t)$ is a latent state for this example, and the latent state and its causal relation is very important to improving learning performance.

\subsection{Variational Causal Inference Q network (VCIQ)}
In addition, we provide an advanced discussion on using causal variational inference~\citep{louizos2017causal} for CIQ training, which is described as variational CIQ (VCIQ). As shown in Fig.~\ref{VCIQ}, VCIQ could be considered as a generative modeling based CIQ by using variational inference to model the latent information for Q-value estimation. Different from VAE-based Q-value estimation, VCIQ further incorporates the information of treatment estimation and outperforms its VAE-based ablations on Env$_2$ and Env$_3$ under the same model parameters. A basic implementation of VCIQ has been provided in our demonstration code for future studies as \textbf{the first} preliminary study of \textbf{a novel and effective variational architecture design on CIQ.}

\begin{figure}[ht!]
\centering
\includegraphics[width=0.80\textwidth]{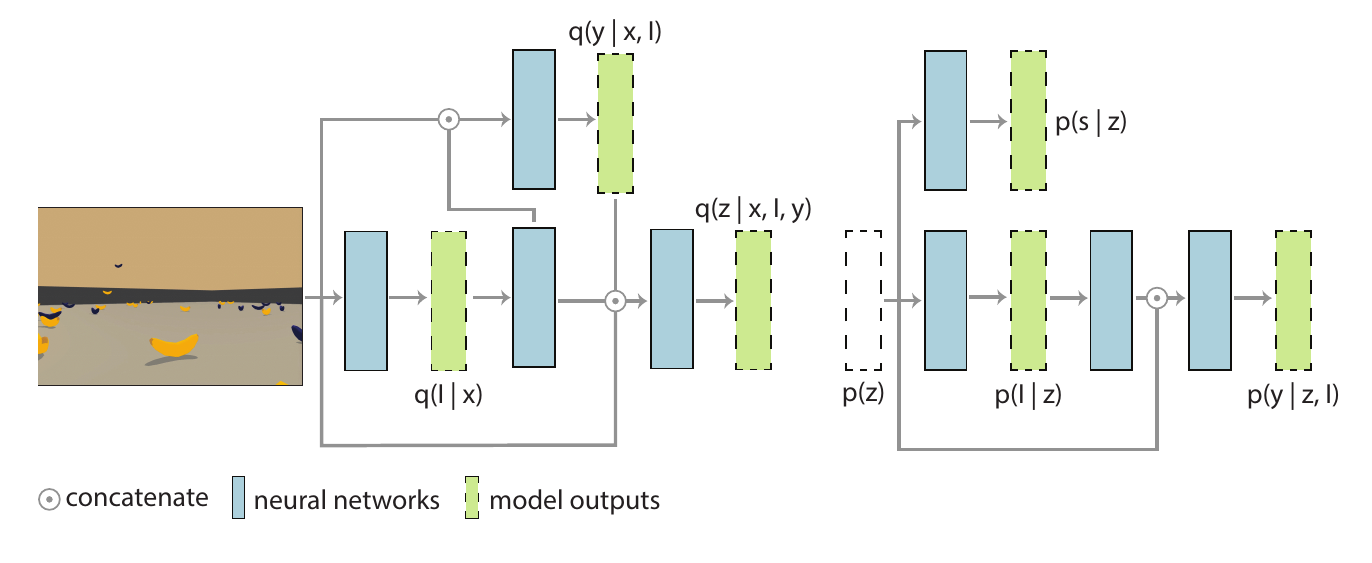}
\caption{VCIQ architecture. The notation $i^{train}_t$ denotes the inference label available during training, whereas $\tilde i_t$ is sampled during causal variational inference as $i_t$ is unknown.}
\label{VCIQ}
\end{figure}

\section{C. Causal Effects}
In a causal learning setting, evaluating treatment effects and conducting statistical refuting experiments are essential to support the underlying causal graphical model. Through resilient reinforcement learning framework, we could interpret DQN by estimating the average treatment effect (ATE) of each noisy and adversarial observation. 
We first define how to 
calculate a treatment effect in the resilient RL settings and conduct statistical refuting tests including random common cause variable test ($T_{c}$), replacing treatment with a random (placebo) variable ($T_{p}$), and removing a random subset of data ($T_{s}$). The open-source causal inference package Dowhy~\citep{sharma2019dowhy} is used for analysis.

\subsection{Average Treatment Effect under Intervention}
We refine a Q-network with discrete actions for estimating treatment effects based on Theorem 1 in~\citep{louizos2017causal}.
In particular, individual treatment effect (ITE) can be defined as the difference between the two potential outcomes of a Q-network; and the average treatment effect (ATE) is the expected value of the potential outcomes over the subjects. 
In a binary treatment setting, for a Q-value function $Q_t(s_t)$ and the interfered state $\mathcal I(s_t)$, the ITE and ATE are calculated by:
\begin{align}
    Q_{t}^{ITE}=Q_{t}(s_t)\left(1-p_{t}\right)+Q_{t}({\mathcal I}(s_{t})) p_{t} 
   \\ 
   A T E= \sum_{t=1}^{\mathcal T}\frac{\mathbb{E}\left[Q_{t}^{ITE}({\mathcal I}(s_t))\right]-\mathbb{E}\left[Q_{t}^{ITE}(s_t)\right]}{\mathcal T}
\end{align}
where $p_t$ is the estimated inference label by the agent and $\mathcal T$ is the total time steps of each episode. As expected, we find that CIQ indeed attains a better ATE and its significance can be informed by the refuting tests based on
$T_c$, $T_{p}$ and $T_s$. 

To evaluate the causal effect, we follow a standard refuting setting \citep{rothman2005causation, pearl2016causal,pearl1995testability} with the causal graphical model in Fig. 3 of the main context to run three major tests, as reported in Tab.~\ref{tab:ate:1}. The code for the statistical test was conducted by Dowhy~\citep{sharma2019dowhy}, which has been submitted as supplementary material. (We intend to open source as a reproducible result.)

Pearl \citep{pearl1995causal} introduces a "do-operator" to study this problem under intervention. The $do$ symbol removes the treatment $\mathbf{tr}$, which is equal to interference $\mathcal{I}$ in the Eq. (1) of the main content , from the given mechanism and sets it to a specific value
by some external intervention. The notation $P(r_{t}|do(tr))$ denotes the probability of reward $r_{t}$ with possible interventions on treatment at time $t$. Following Pearl’s back-door adjustment formula \citep{pearl2009causality} and the causal graphical model in Figure 2 of the main content., it is 
proved in \citep{louizos2017causal} that the causal effect for a given binary treatment $\mathbf{tr}$ (denoted as a binary interference label $i_{t}$ in Eq. (1) of the main content), a series of proxy variables $\mathbf{X}= (\sum^{\mathcal T}_{t=1} x_{t}) \equiv  \mathbf{S'}= (\sum^{\mathcal T}_{t=1} s'_{t}$), as $s'_{t}$ in Eq. (1) of the main content, a summation of accumulated reward $\mathbf{R} = (\sum^{\mathcal T}_{t=1} \mathbf{r_{t}})$ and a confounding variable $\mathbf{Z}$ can be evaluated by (similarly for $\mathbf{tr}=0$):
\begin{equation}
\begin{array}{l}{p(\mathbf{R} | \mathbf{S'}, d o(\mathbf{tr}=1))}= \int_{\mathbf{Z}} p(\mathbf{R} | \mathbf{S}, d o(\mathbf{tr}=1), \mathbf{Z}) p(\mathbf{Z} | \mathbf{S}, d o(\mathbf{tr}=1)) d \mathbf{Z}\stackrel{(i)}{=} \\
\int_{\mathbf{Z}} p(\mathbf{R} | \mathbf{S'}, \mathbf{tr}=1, \mathbf{Z}) p(\mathbf{Z} | \mathbf{S'}) d \mathbf{Z},\end{array}
\label{eq:sp:ite}
\end{equation}

where equality (i) is by the rules of do-calculus~\citep{pearl1995causal, pearl2016causal} applied to the causal graph applied on Figure 3 (a) of the main content. We extend to Eq.~\ref{eq:sp:ite} on individual outcome study with DQNs, which is  known by the Theorem 1. from Louizos et. al. ~\citep{louizos2017causal} and Chapter 3.2 of Pearl \citep{pearl2009causality}.

\subsection{Refutation Test:}
\label{sup:d:refute}
A sampling plan for collecting samples refer to as subgroups (i=1, ..., k). Common cause variation (T-c) is denoted as $\sigma_{c}$, which is an estimate of common cause variation within the subgroups in terms of the standard deviation of the within subgroup variation:
\begin{equation}
\sigma_{c} \cong \sum_{i=1}^{k} s_{i} / k ,
\end{equation}
where $k$ denotes as the number of sample size. We introduce intervention a error rate $n$, which is a probability to feed error interference (e.g., feed $i_{t}$ = 0 even under interference with a probability of $n$) and results shown in Table~\ref{tab:sp:ate}.

The test (T-p) of replacing treatment with a random (placebo) variable is conducted by modifying the graphical relationship in the proposed probabilistic model in Fig. 3 of the main context. The new assign variable will follow the placebo note but with a value sampling from a random Gaussian distribution. The test of removing a random subset of data (T-r) is to randomly split and sampling the subset value to calculate an average treatment value in the proposed graphical model.  
We use the official dowhy\footnote{Source:github.com/microsoft/dowhy/causal\_refuters} implementation, which includes: (1) confounders effect on treatment: how the simulated confounder affects the value of treatment; (2) confounders effect on outcome: how the simulated confounder affects the value of outcome; (3) effect strength on treatment: parameter for the strength of the effect of simulated confounder on treatment, and (4) effect strength on outcome: parameter for the strength of the effect of simulated confounder on outcome. Following the refutation experiment in the CEVAE paper, we conduct experiments shown in Tab.~S\ref{tab:sp:ate} and ~S\ref{tab:ate:1} with 10 $\%$ to 50 $\%$ intervention noise on the binary treatment labels. The results in Tab.~S\ref{tab:sp:ate} show that proposed CIQ maintains a lower rate compared with the benchmark methods included logistic regression and CEVAE (refer to Fig. 4 (b) in \citep{louizos2017causal}). 

Through Eq. (9) to (10) and the corresponding correct action rate in the main context, 
we could interpret deep q-network by estimating the average treatment effect (ATE) of each noisy and adversarial observation. ATE~\citep{louizos2017causal, shalit2017estimating} is defined as the expected value of the potential outcomes (e.g., disease) over the subjects (e.g., clinical features.) For example, in navigation environments, we could rank the harmfulness of each noisy observation~\citep{yang2019causal} against q-network from the autonomous driving agent. 

\begin{table}[ht!]
\centering
\caption{Absolute error ATE estimate; lower value indicates a much stable causal inference under perturbation on logic direction with $P^I = 10\%$ and $n$=error rate of intervention on the binary label.}
\label{tab:sp:ate}
\begin{tabular}{|l|lllll|}
\hline
\textbf{Model} & $n$=0.1 & $n$=0.2 & $n$=0.3 & $n$=0.4 & $n$=0.5\\ \hline
LR & 0.062 & 0.084 & 0.128 & 0.151 & 0.164 \\ \hline
CEVAE & 0.021 & 0.042 & 0.062 & 0.072 & 0.081 \\ \hline
CIQ & \textbf{0.019} & \textbf{0.020} & \textbf{0.015} & \textbf{0.018} & \textbf{0.023} \\ \hline
\end{tabular}
\end{table}

\begin{table}[ht!]
\caption{Validation of causal effect by three causal refuting tests. The causal effect estimate is tested by random common cause variable test (T-c),  replacing treatment with a random (placebo) variable (T-p -- lower is better), and removing a random subset of data (T-r). Adversarial attack outperforms in most tests.}
\label{tab:ate:1}
\centering
\adjustbox{max width=1\linewidth}{
\begin{tabular}{|l|llll|llll|}
\hline
Noise : $do(\mathcal I)$ & \multicolumn{4}{c|}{n = 0.1}                                                                 & \multicolumn{4}{c|}{n = 0.2}                                                                 \\ \hline
\textbf{Method}          & \multicolumn{1}{l|}{ATE} & \multicolumn{1}{l|}{w/ T-c} & \multicolumn{1}{l|}{w/ T-p} & w/ T-s & \multicolumn{1}{l|}{ATE} & \multicolumn{1}{l|}{w/ T-c} & \multicolumn{1}{l|}{w/ T-p} & w/ T-s \\ \hline
Adversarial              & 0.2432                   & 0.2431                      & 0.0294                      & 0.2488 & 0.0868                   & 0.0868                      & 0.0109                      & 0.0865 \\ \cline{1-1}
Black-out                & 0.2354                   & 0.2212                      & 0.0244                      & 0.2351 & 0.0873                   & 0.0870                      & 0.0140                      & 0.0781 \\ \cline{1-1}
Gaussian                 & 0.1792                   & 0.1763                      & 0.0120                      & 0.1751 & 0.0590                   & 0.0610                      & 0.0130                      & 0.0571 \\ \hline
Frozen Frame             & 0.1614                   & 0.1614                      & 0.0168                      & 0.1435 & 0.0868                   & 0.0868                      & 0.0140                      & 0.0573 \\ \hline
\end{tabular}}
\end{table}

\section{D. Ablation Studies}
\label{sup:e:ablation}
\subsection{The Number of Model Parameters}
We also spend efforts on a parameter-study on the results of average returns between different DQN-based models, which included DQN, Double DQN (DDQN), DDQN with dueling, CIQ, DQN with a classifier (DQN-CF), DDQN with a classifier (DDQN-CF), DQN with a variational autoencoder~\citep{kingma2013auto} (DQN-VAE), NoisyNet, and using the latent input of causal effect variational autoencoder for Q network (CEVAE-Q) prediction. Overall, CEVAE-Q is with minimal-requested parameters with 14.4 M (in Env$_{1}$) as the largest model used in our experiments in Tab.~\ref{tab:ab:over}. CIQ remains roughly similar parameters as 9.7M compared with DDQN, DDQN$_{d}$, and Noisy Net. Our ablation study in Tab.~\ref{tab:ab:over} indicates the advantages of CIQ are not owing to extensive features using in the model according to the size of parameters.  
CIQ attains benchmark results in our resilient reinforcement learning setting compared to the other DQN models. 

\begin{table}[ht!]
\centering
\caption{Ablation study on parameter of different DQN models using in our experiments in Env${_{1}}$, Env${_{2}}$, Env${_{3}}$, and Env${_{4}}$. }
\label{tab:ab:over}
\begin{tabular}{|l|l|llll|}
\hline
Model & Para. & Env$_{1}$ & Env$_{2}$ & Env$_{3}$ & Env$_{4}$ \\ \hline
DQN & 6.9M & 20.2 & 3.1 & -113.6 & 10.8 \\ \hline
DDQN & 9.7M & 41.1 & 3.5 & -123.4 & 57.9 \\ \hline
DDQN$_{d}$ & 9.7M & 82.9 & 4.7 & -136.3 & 67.2 \\ \hline
CIQ & 9.7M & \textbf{195.1} & \textbf{12.5} & \textbf{200.1} & \textbf{195.2} \\ \hline
DQN-CF & 9.7M & 140.5 & \textbf{12.5} & -78.3 & 120.2 \\ \hline
DDQN-CF & 12.1M & 161.3 & \textbf{12.5} & -10.1 & 128.2 \\ \hline
DQN-VAE & 9.7M & 151.1 & 7.6 & -92.9 & 24.1 \\ \hline
NoisyNet & 9.7M & 158.6 & 5.5 & 50.1 & 100.1 \\ \hline
CEVAE-Q & 12.5M & 39.8 & 11.5 & -156.5 & 45.8 \\ \hline
DVRLQ-CF & 10.7M & 107.11 & 9.2 & -34.9 & 42.5 \\ \hline
\end{tabular}
\end{table}

Noisy Nets \citep{fortunato2017noisy} has been introduced as a benchmark whose parameters are perturbed by a parametric noise function. We select Noisy Net in a DQN format as a noisy training baseline with interfered state $s'_t$ concated with a interference label $i_t$ from a classifier.

\subsection{Latent Representations}
\label{e2:latent}
We conduct an ablation study by comparing other latent representation methods to the proposed CIQ model.

\textbf{DQN with an variational autoencoder (DQN-VAE):}
To learn important features from observations, many recent works leverage deep variational inference for accessing latent states for feeding into DQN. We provide a baseline on training a variational autoencoder (VAE) built upon the DQN baseline, denoted as DQN-VAE. The DQN-VAE baseline is targeted to recover a potential noisy state and feed the bottleneck latent features into the Q-network. 

\textbf{CEVAE-Q Network:}
TARNet~\citep{shalit2017estimating, louizos2017causal} is a major class of neural network architectures for estimating outcomes of a binary treatment on linear data (e.g., clinical reports). Our proposed CIQ uses an end-to-end approach to learn the interventional (causal) features. We provide another baseline on using the latent features from a causal variational autoencoder~\citep{louizos2017causal} (CEVAE) as latent features as state inputs followed the loss function in~\citep{louizos2017causal}. To get the causal latent model in Q-network, we approximate the posterior distribution by a neural network $z_t \sim p(z_t|\tilde x_t)= \phi(\tilde x_t; \theta_1)$. Then we train this neural network, CEVAE-Q, by variational inference using the generative model.

We conduct 10,000 times experiments and fine-tuning on DQN-VAE and CEVAE-Q. 
The results in Table \ref{tab:sp:41} shows that the latent representation learned by CIQ provides better resilience than other representations.

\begin{table}[ht!]
\centering
\caption{Performance on average return in clean and five different noise level in Env$_{1}$ evaluated by an average of under uncertain perturbation included Gaussian, adversarial, blackout, and frozen frame. All DQN models solve the environment with over 195.0 average returns in a clean state input (a.k.a. no noise).}
\label{tab:sp:41}
\begin{tabular}{|l|llllll|c|c|}
\hline
Model  &0\% & 10\% & 20\% & 30\% & 40\% & 50\% & Cosine & Para.\\ \hline
DQN & 195.1 & 115.0 & 68.9 & 32.3 & 22.8 & 19.1 & 42.1 & 6.9 M \\ \hline
DDQN & 195.1 & 123.4 & 73.2 & 59.4 & 28.1 & 22.8 & 62.8 & 9.7 M \\ \hline
CIQ & 195.1 & \textbf{195.1} & \textbf{195.1} & \textbf{195.0} & \textbf{168.2} & \textbf{113.1} &   \textbf{195.0} & 9.7 M\\ \hline
DQN-VAE & 195.1 & 173.5 & 141.3 & 124.8 & 86.5 & 33.3 &101.2 & 9.7 M\\ \hline
DQN-CEVAE & 195.1 & 154.4 & 111.9 & 94.8 & 75.5 & 48.3 & 82.1 & 12.5 M\\ \hline
\end{tabular}
\end{table}

\begin{table}[ht!]
\centering
\caption{Structure-wise ablation studies of CIQ in Env$_1$ (noise level $P=20\%$). }
\label{tab:aba:2}
\begin{tabular}{|l|l|l|l|}
\hline
Model & Return & CLEVER-Q & AC-Rate \\ \hline
CIQ & 195.1 & 0.241 & 97.3 \\ \hline
B1: CIQ w/o the concatenation & 152.1 & 0.196 & 78.2 \\ \hline
B2: CIQ w/o the $\theta_3$ network & 150.1 & 0.182 & 65.6 \\ \hline
B3: CIQ w/o providing grounded $i_t$ for training & 135.1 & 0.142 & 53.6 \\ \hline
\end{tabular}
\end{table}

\subsection{Architecture Ablation Study on CIQ}
\label{e3:arch}
To study the importance of specific components in CIQ, we conducted additional ablation studies and constructed two new baseline models shown in Table~\ref{tab:aba:2} tested in Env$_1$ (Cartpole).
Baseline 1 (B1) - CIQ w/o the concatenation of $\tilde{i}_t$ in $S^C_I$. This comparison shows the importance of using both the predicted confounder $\tilde{z}_t$ and the predicted label $\tilde{i}_t$. B1 uses label prediction to help latent representation but not using the predicted labels in decision-making. The structure is motivated by a task-specific (depth-only information from a maze environment) DQN network from a previous study~\citep{mirowski2016learning}. 

Baseline 2 (B2) - CIQ w/o the $\theta_3$ network (for testing $\theta_3$’s importance).
 
 Baseline 3 (B3) - CIQ w/o providing grounded $i_t$ for training, for testing the importance of the inference loss and joint loss propagation. The superior performance of CIQ validates the proposed model is indeed crucial from the previous discussion in Section 3 of the main content.The setting used for Table 16 is the same as the setting for the third column (noise level = 20\%) in Table 5 and the third column (noise level = 20\%) in Table 15, tested in Env$_1$ (Cartpole).
 
We provide another variant of DQN-CF with ``two heads'' to account for the binary classifier output (perturbed/non-perturbed). We conduct experiments in Env$_1$ with Gaussian interference ($\mathcal{I}$=L$_2$) for this two-head DQN-CF variant, denoted as DQN-CF2H. 

As shown in Tab.~\ref{tab:rbt1}, DQN-CF2H achieves slightly better performance than DQN-CF at the higher noise levels P=$\{30\%, 40\%\}$, but slightly worse performance at the lower noise levels P=$\{10\%, 20\%\}$. CIQ still performs the best in terms of average rewards and CLEVER-Q scores, suggesting the benefits of our method beyond the architectural difference.

\begin{table}[ht!]
\centering
\caption{Evaluation of $\{$average rewards$/$CLEVER-Q$\}$ in Env$_1$.}
\label{tab:rbt1}
\begin{tabular}{|c|c|c|c|c|}
\hline
Model, $\mathcal{I}$=L$_2$  & P=10\% & P=20\% & P=30\% & P=40\% \\ \hline
DQN-CF & 192.8$/$0.185 & 147.7$/$0.145 & 131.4$/$0.127 & 88.2$/$0.92 \\ \hline
DQN-CF2H & 188.4$/$0.180 & 145.2$/$0.141 & 135.2$/$0.131 & 92.1$/$0.97 \\ \hline
CIQ & \textbf{195.1}$/$\textbf{0.221} & \textbf{195.1}$/$\textbf{0.235} & \textbf{195.0}$/$\textbf{0.232} & \textbf{168.2}$/$\textbf{0.186} \\ \hline
\end{tabular}
\end{table}

\subsection{Perturbation-based Neural Saliency for DQN Agents}
\label{sup:e:saliency:map}
To better understand our CIQ model, we use the benchmark saliency method on DQN agent, perturbation-based saliency map, \citep{greydanus2018visualizing} to visualize the salient pixels, which are sensitive to the loss function of the trained DQNs. We made a case study of an input frame under an adversarial perturbation, as shown in Fig. \ref{fig:figure:4}. We evaluate DQN agents included DQN, CIQ, DQN-CF, DVRLQ-CF and record its weighted center from the neural saliency map, where saliency pixels of CIQ respond to ground true actions more frequent (96.2\%) than the other DQN methods. \\

\textbf{Predicted interference label accuracy}:  We use the switching mechanism for end-to-end DQN training. The major difference between CIQ and enhanced DQNs is how to use the interference information for learning, which results in different architectures. We provide a related case study in Table~\ref{tab:acc}. Considerall the task and noisy conditions, the Pearson correlation coefficient between prediction accuracy and average returns is 0.1081, which shows low statistical correlation.\\

\textbf{Baselines discussion considering the variance of average rewards}: We follow the standard DQNs performance evaluation in the DRL community by using average returns to evaluate the performance for DQNs. CIQ’s average returns outperform the other DQNs. Moreover, CIQ shows advantage on evaluated robustness properties including CLEVER-Q and action correction rate. We calculate the p-value of the learning curves of CIQ between the other DQNs over all evaluated noisy conditions and environments. In general, the p-value <0.05 could be considered as statistically significant in Table~\ref{tab:pvalue}. \\

\textbf{White-out Ablation:} We tested the performance of CIQ against unseen (not used int raining) interferences and showed that CIQ has improved robustness. In our white experiments, from scratch training with white-out perturbation, CIQ remains the best in all deployed environments. The average reward for white-out (e.g., to give each state observation a max-available intensity) agents showed a relative 12.3\% decay from black-out perturbation (e.g., set each state observation as zero). 

\begin{table}[ht!]
\centering
\caption{Additional ablation study with noisy environments (P= 20\%) for (continuous control Cartpole).The number shows prediction accuracy on the interference label (\%). }
\label{tab:acc}
\begin{tabular}{|l|l|l|l|l|}
\hline
\multicolumn{1}{|c|}{\textbf{Model}} & \multicolumn{1}{c|}{\textbf{Gaussian}} & \multicolumn{1}{c|}{\textbf{Adversarial}} & \multicolumn{1}{c|}{\textbf{Blackout}} & \multicolumn{1}{c|}{\textbf{Frozen}} \\ \hline
DQN-CF & 98.34 & 95.34 & 100.00 & 91.23 \\ \hline
DVRLQ-CF & 97.34 & 95.52 & 100.00 & 90.92 \\ \hline
CIQ & 98.21 & 95.67 & 100.00 & 90.82 \\ \hline
\end{tabular}
\end{table}

\begin{table}[ht!]
\centering
\caption{p-value of the learning curves collected different environments and noisy types and levels presented in this paper. The results suggested that the learning curves of CIQs (best performance in terms of the average returns, CLEVER-Q, action correction rate) could be considered as statistically significant. }
\label{tab:pvalue}
\begin{tabular}{|l|l|l|l|l|l|}
\hline
\multicolumn{1}{|c|}{\textbf{Model}} & \multicolumn{1}{c|}{\textbf{DQN}} & \multicolumn{1}{c|}{\textbf{DQN-CF}} & \multicolumn{1}{c|}{\textbf{DQN-SA}} & \multicolumn{1}{c|}{\textbf{DVRLQ}} & \multicolumn{1}{c|}{\textbf{DVRLQ-CF}} \\ \hline
CIQ & 0.0112 & 0.0291 & 0.0212 & 0.0142 & 0.0183 \\ \hline
\end{tabular}
\end{table}

\subsection{Robustness Transferability among Different Interference Types}
\label{sup:sec:trans:infer}
We conduct additional experiments to study robustness transferability of DQN and CIQ when training and testing under different kinds of interference types in Env$_1$. Note that both architectures would solve a clean environment successfully (over 195.0). The reported numbers are averaged over 20 independent runs for each condition. As shown in Table~\ref{tab:transfer:DQN} and Table~\ref{tab:transfer:CIQ}, CIQ agents consistently attain significant performance improvement when compared with DQN agents, especially between Gaussian and adversarial perturbation.
For example, CIQ succeeded to solve the environment 12 times out of 20 independent runs, with an average score of 165.2 in Gaussian (train)-Adversarial (test) adaptation.  
n particular, for CIQ, 12 times out of 20 independent runs are successfully transfered from Gaussian to Adversarial perturbation. Interestingly, augmenting adversarial perturbation does not always guarantee the best policy transfer when testing in the Blackout and Frozen conditions, which shows a slightly lower performance compared with training on Gaussian interference. The reason could be attributed to the recent findings that adversarial training can undermine model generalization~\citep{raghunathan2019adversarial,su2018robustness}.

\begin{table}[ht!]
\centering
\caption{DQN adaptation: train and test on different interference (noise level $P = 20\%$) in Env$_1$.}
\label{tab:transfer:DQN}
\begin{tabular}{|l|l|l|l|l|}
\hline
Train \textbf{/ Test} & \textbf{Gaussian} & \textbf{Adversarial} & \textbf{Blackout} & \textbf{Frozen} \\ \hline
Gaussian & 67.4 & 38.4 & 43.7 & 52.1 \\ \hline
Adversarial & 53.2 & 42.5 & 35.3 & 44.2 \\ \hline
Blackout & 46.2 & 27.4 & 85.7 & 50.3 \\ \hline
Frozen & 62.3 & 26.2 & 45.9 & 62.1 \\ \hline
\end{tabular}
\end{table}

\begin{table}[ht!]
\centering
\caption{CIQ adaptation: train and test on different interference (noise level $P = 20\%$) in Env$_1$. }
\label{tab:transfer:CIQ}
\begin{tabular}{|l|l|l|l|l|}
\hline
Train \textbf{/ Test} & \textbf{Gaussian} & \textbf{Adversarial} & \textbf{Blackout} & \textbf{Frozen} \\ \hline
Gaussian & \textbf{195.1} & 165.2 & 158.2 & 167.8 \\ \hline
Adversarial & 162.8 & \textbf{195.0} & 152.4 & 162.5 \\ \hline
Blackout & 131.3 & 121.1 & \textbf{195.3} & 145.7 \\ \hline
Frozen & 161.6 & 135.8 & 147.1 & \textbf{195.2} \\ \hline
\end{tabular}
\end{table}

\subsection{CIQ with Multi-Interference.}
\label{sup:sec:e6:multi:infer}

Here we show how the proposed CIQ model can be extended from the architecture shown in Figure~\ref{Nets} to the multi-interference (MI) setting.
The design intuition is based on two-step inference by a common encoder, to infer a clean or noisy observation, followed by an individual decoder tied to an interference type, to infer noisy types and activate the corresponding Q-network (named $\theta_4$).
\\
Note that the two-step inference mechanism follows the RCM as two sequential potential outcome estimation models~\citep{rubin1974estimating, imbens2010rubin}, where interfered observation $x'_t$ is determined by two labels $i_{1,t}$ and $i_{2,t}$ according to $x'_t = i_{1,t} (i_{2, t} \mathcal I_1(x_t) + (1 - i_{2, t})\mathcal I_2(x_t) )  + (1 - i_{1, t}) x_t$ extended from Eq.(\ref{eq:noise:41}), where $i_{1,t}$ indicates the presence of interference and $i_{2,t}$ indicates which interference type (here we show the case of two types).
As a proof of concept, we consider two interference types together, Gaussian noise and adversarial perturbation. In this setting every observation (state) can possibly undergo an interference with either Gaussian noise or Adversarial perturbation.
From the results shown in Table.~\ref{tab:CIQ:MI}, we find that the extended version of CIQ, CIQ-MI, is capable of making correct action to solve (over 195.0) the environment when training with mixed interference types (last row). Another finding is that  robustness tranferability (153.9/154.2) in CIQ-MI is slightly degraded compared to the results (162.8/165.2) in Table~\ref{tab:transfer:CIQ} with the same training episodes ($500$) and runs ($20$), which could be caused by the increased requirement of model capacity~\citep{ammanabrolu2019transfer, yang2020multi} of CIQ-MI.

\subsection{Hardware Setup and Energy Cost}
We use Nvidia GPUs (2080-Ti and V100) for our experiments with Compute Unified Device Architecture (CUDA) version 10.1. To conduct the results shown in the paper, it takes around 20 min to run 1,000 epochs (maximum) with a batch size 32 for each environment considering the hyper-parameters tuning  described in Section \ref{sup:c:traing:setting} of the main paper. In total, the experiments presented (four environment with four different types of noise and its ablation studies) in this paper took around 
343 wall-clock hours with a 300W power supplier.
\clearpage
\section*{Broader Impact}
 With the recent advances in using (deep) RL to solve problems that were once believed to be challenging for machines to learn and understand, we believe it is timely to move to the next milestone: understanding the resilience of DRL to rare but possible and recurring noisy interferences, which motivates this work with a novel deign of Q-networks inspired from causal learning.
 \begin{itemize}
     \item \emph{Who may benefit from this research:} Researchers  working on RL technology; as well as the users using the related technology for responsible and safe machine learning technology.
     \item \emph{Who may be put at disadvantage from this research: :} When the work discloses the findings that Q-networks can be sensitive to noisy interferences, we understand the responsibilities of explaining the results to the public properly and providing reproducible evaluation. 
     \item \emph{Whether the task/method leverages biases in the data:} To alleviate possible bias in the data and model, we put efforts toward designing reproducible metrics and evaluating a wide variety of reproducible environments, interference types, and several baseline models. 
 \end{itemize}

\end{document}